\documentclass{article}

\usepackage[preprint]{corl_2026}

\usepackage{amsmath,amssymb,mathtools,amsthm}
\usepackage{booktabs}
\usepackage{graphicx}
\usepackage{xcolor}
\usepackage{enumitem}
\usepackage{array}
\usepackage{multirow}
\usepackage{subcaption}

\newtheorem{proposition}{Proposition}
\newtheorem{lemma}{Lemma}
\newtheorem{remark}{Remark}

\newcommand{\cvloss}{\mathcal{L}_{\mathrm{CV}}}

\title{Cross-View Action Consistency for Camera-Robust Vision-Language-Action Policies}
\author{%
  Bingqi Huang \\
  Tsinghua University \\
  \texttt{hbq21@mails.tsinghua.edu.cn}
  \And
  Bingchuan Wei \\
  Tsinghua University
  \And
  Xuan Wang \\
  Informatics Institute, University of Amsterdam \\
  \& Faculty of Science, Vrije Universiteit Amsterdam
  \AND
  Yingkai Cai \\
  Tsinghua University
  \And
  Zhaokui Wang \\
  Tsinghua University
}

\begin{document}
\raggedbottom
\maketitle
\hypersetup{pdfauthor={Bingqi Huang, Bingchuan Wei, Xuan Wang, Yingkai Cai, Zhaokui Wang}, pdftitle={Cross-View Action Consistency for Camera-Robust Vision-Language-Action Policies}, pdfsubject={}}

\begin{abstract}
\label{sec:abstract}
Vision-language-action (VLA) policies fine-tuned from a fixed scene camera can fail when the camera is moved, even when the task, objects, language, and robot state are unchanged. We study scene-camera viewpoint robustness using only a scene RGB image, language, and proprioception, without camera labels, extrinsics, depth, or point-cloud inputs. The wrist stream is masked throughout to prevent an unperturbed visual shortcut from confounding attribution to scene-camera variation. For flow-based VLAs, we propose to regularize the action-flow velocity field, the quantity directly integrated to generate continuous action chunks. We construct action-equivalent view pairs by resetting original LIBERO demonstrations to the same MuJoCo state and rendering nominal and perturbed scene-camera views. Both views are supervised by flow matching, while a cross-view loss encourages their predicted action-flow velocities to agree at the same sampled flow coordinates. On the LIBERO-Plus camera-perturbation track, our method reaches $\mathbf{87.2{\pm}0.4\%}$ ($4{,}797$ rollouts per seed across $3$ training seeds), $+7.4$pp over flow-matching-only training on the same paired data ($79.8{\pm}0.8\%$, also $3$ seeds) and $+12.5$pp over naive mixed-camera SFT, and does not cost nominal-camera success ($94.2{\pm}0.8\%$ vs.\ $93.6{\pm}0.4\%$ for same-data FM-only). A shuffled-pair control collapses to $25.8\%$, showing that the gain depends on action-equivalent pairing. When the training pairs are restricted to part of the camera-pose range, the gain extends beyond the paired support on azimuth, camera distance, and endpoint rotation, most on azimuth. On a real robot, we evaluate three tabletop tasks with $10$ rollouts per task and camera placement; held-out-camera success improves from $53.3\%$ to $74.4\%$ under the same single-scene-RGB inference interface.
\end{abstract}

\keywords{Vision-Language-Action Models, Camera Robustness for VLAs, Flow Matching, Multi-View Supervision}

\begin{figure}[t]
  \centering
  \includegraphics[width=\textwidth]{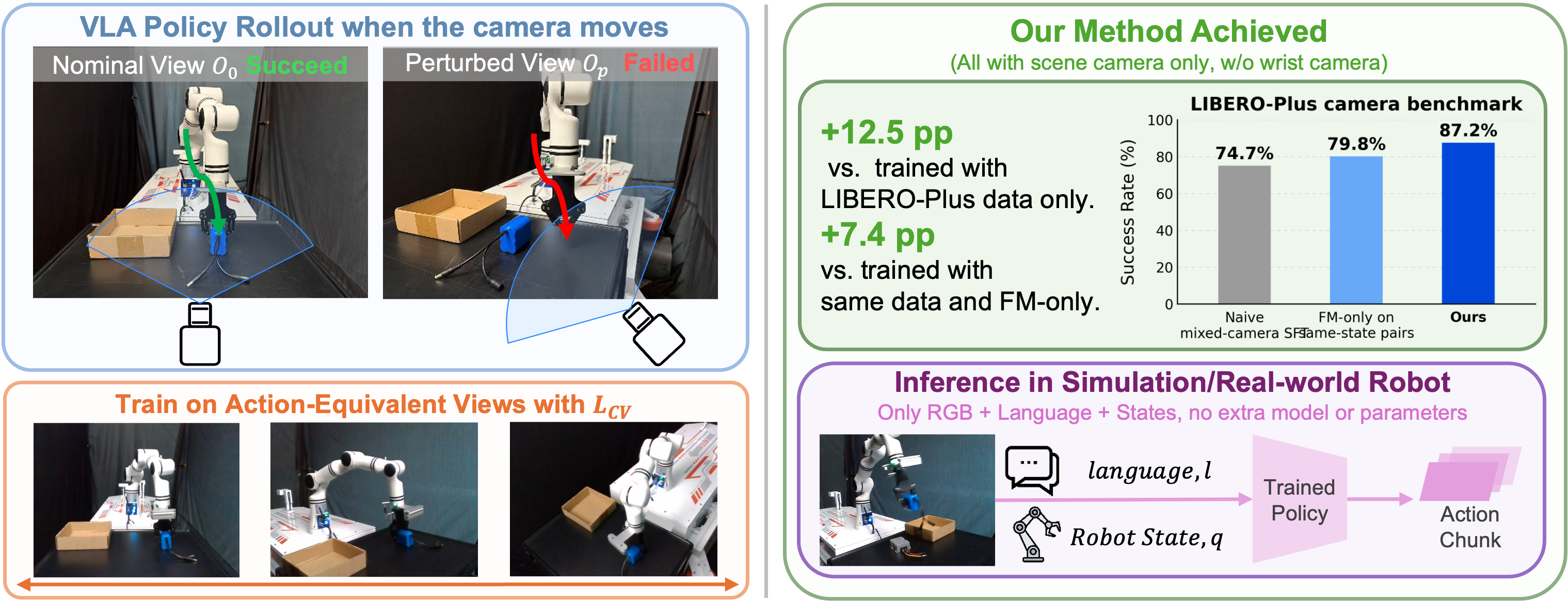}
  \caption{Camera viewpoint shifts can cause a VLA policy to produce different actions from the same physical state. We train on action-equivalent views so that different views of the same state are supervised by the same demonstrated action and regularized to produce consistent action-flow predictions. At inference, the policy receives a single RGB scene image (held-out camera placement), language instruction, and proprioceptive state; the wrist stream is masked throughout as an experimental control.}
  \label{fig:teaser}
\end{figure}

\section{Introduction}
\label{sec:intro}
Vision-language-action policies map RGB observations, language instructions, and robot state to robot actions~\citep{brohan2023rt1,brohan2024rt2,kim2024openvla,black2024pi0,intelligence2025pi05}. In practice, a pretrained VLA is often adapted from demonstrations collected with a fixed scene camera, but the same policy may degrade when the camera is bumped, re-mounted, or moved in distance, height, or orientation. Existing remedies often change the sensing or inference contract through depth, point clouds, calibrated multi-view inputs, camera rays, camera labels, novel-view rendering, or geometry-aware architectures~\citep{tian2024vista,qu2025spatialvla,abouzeid2025geoaware,li2025pointvla,li2025bridgevla,jiang2025cameracond,zhang2025ocvla,heo2026anycamvla,singh2025ogvla}. Camera-diverse supervised fine-tuning is simpler, but data diversity alone does not impose action-level invariance between equivalent views. We ask whether a flow-based VLA can become robust to scene-camera motion without depth, point clouds, camera labels, or geometric inputs.

The desired invariance is action-level. If different images observe the same physical state and share the same instruction and robot state, the policy should intend the same action. For flow-based VLAs, actions are generated by integrating a learned velocity field $v_\theta(x_t,t\mid o,l,q)$ over noisy action chunks. We therefore regularize the object that directly produces actions: the local action-flow velocity prediction at the same flow coordinate. This differs from view-invariant representation learning, where matching an intermediate feature does not necessarily imply matching the action distribution.

We implement this idea with action-equivalent multi-view pairs: two scene images of the same physical state (nominal and perturbed camera) sharing language, proprioception, and demonstrated actions. During training, both views receive flow matching to the same action target, and their predicted velocities are encouraged to agree at the same sampled flow coordinate; this cross-view loss is used only during training.

We evaluate on the LIBERO-Plus camera-perturbation track~\citep{liu2023libero,fei2025liberoplus} against a nominal-only baseline, camera-diverse supervised fine-tuning, and---most informatively---flow-matching-only training on the same paired data, the last of which isolates the cross-view term from pair-data exposure. The proposed objective reaches $87.2{\pm}0.4\%$ over three training seeds, $+7.4$pp over the same-data FM-only control, which also uses the same three seeds. A shuffled-pair control that breaks action equivalence inside the cross-view loss while preserving per-view flow-matching labels collapses performance, showing that the gain depends on action-equivalent correspondence rather than generic smoothing. Restricting the training pairs to part of the camera-pose range shows that the gain extends to held-out azimuth, camera-distance, and endpoint-rotation poses. A real-robot held-out-camera study with synchronized multi-camera demonstrations confirms that the same training principle transfers beyond simulator rerendering.

\textbf{Contributions.} (1) We formulate scene-camera viewpoint robustness for flow-based VLAs as local action-flow consistency across action-equivalent views, and propose a cross-view action-flow consistency objective that regularizes the velocity field for action generation directly, while adding no sensors, calibration, or camera-specific inputs at inference. (2) Controlled ablations show that matched cross-view action-flow consistency improves over camera-diverse flow matching, and that its benefit depends on action-equivalent pairing rather than generic regularization.

\section{Related Work}
\label{sec:related}

\textbf{Vision-language-action policies and flow action heads.}
VLAs combine visual encoders, language-conditioned transformers, and action decoders to control robots from RGB observations and instructions~\citep{brohan2023rt1,brohan2024rt2,kim2024openvla}. Recent systems generate continuous action chunks with diffusion or flow-style objectives~\citep{chi2023diffpol,black2024pi0,intelligence2025pi05,zhang2025flowpolicy}. We build on this formulation: the policy predicts a velocity field in action space, and our regularizer acts directly on that field.

\textbf{Camera and geometry-aware robot policies.}
Camera robustness can be introduced at different stages. Some methods change the deployment-time inputs or computation through spatial or geometry-aware VLA architectures~\citep{qu2025spatialvla,abouzeid2025geoaware}, point-cloud and 3D-aligned inputs~\citep{li2025pointvla,li2025bridgevla}, camera conditioning~\citep{jiang2025cameracond}, orthographic view generation~\citep{singh2025ogvla}, observation-centric action formulations~\citep{zhang2025ocvla}, test-time view restoration~\citep{heo2026anycamvla}, or learned representation recalibration~\citep{vlageneralizable2025}. Other approaches synthesize additional viewpoints during training while leaving the deployed policy interface unchanged~\citep{tian2024vista}. Our method also leaves deployment unchanged, but uses observed action-equivalent views to regularize the training objective rather than synthesizing a replacement view. Appendix~\ref{app:scope_comparison} contrasts these requirements across method families.

\textbf{Multi-view supervision and invariance.}
Multi-view data can teach representations or policies to ignore nuisance viewpoint changes~\citep{seo2023mvmwm,pang2025reviwo}. Generic robot visual pretraining also improves downstream manipulation~\citep{nair2022r3m,xiao2022mvp,majumdar2023vc1,karamcheti2023voltron}. However, feature invariance and policy invariance are not equivalent: a representation may match across views while the action decoder still changes, or it may preserve view-specific information that is useful but should not alter the action for an equivalent state. We therefore place consistency at the action-flow output of the policy rather than at a fixed hidden representation; feature-level alternatives we explored (Appendix~\ref{app:feature_alternatives}) did not yield a reliable camera-track gain.

\section{Cross-View Action-Flow Consistency}
\label{sec:method}

\begin{figure}[b]
  \centering
  \includegraphics[width=\textwidth]{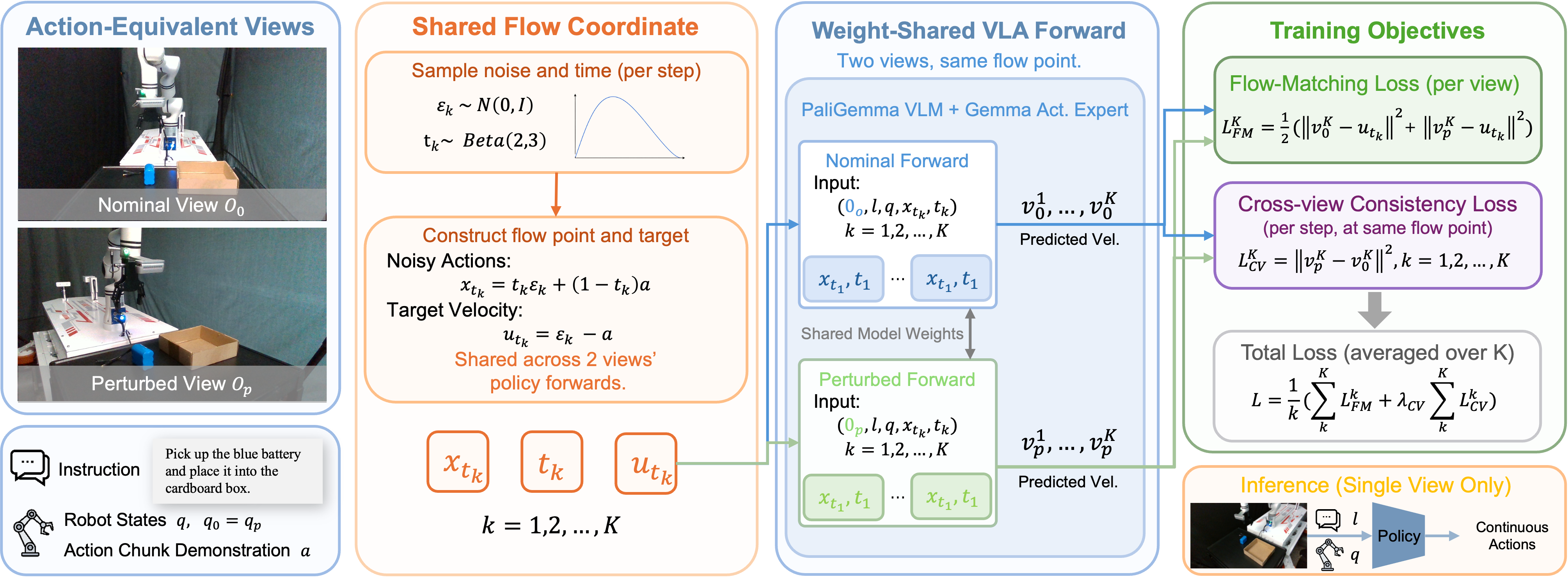}
  \caption{Cross-view action-flow consistency. During training, an action-equivalent nominal/perturbed view pair is processed by weight-shared VLA forward passes. Both views remain supervised by the standard flow-matching objective, while a cross-view loss encourages their action-flow predictions to agree at the same sampled flow coordinates. The pair is used only during training; inference uses a single RGB scene image, language instruction, and proprioceptive state.}
  \label{fig:method}
\end{figure}

\textbf{Action-equivalent pairs.}
Let $s$ be the physical task state, $c$ a scene-camera configuration, and $o=h(s,c)$ the rendered RGB observation. A training pair is
\begin{equation}
  (o_0,o_p,l,q,a), \qquad o_0=h(s,c_0),\quad o_p=h(s,c_p),
\end{equation}
where $c_0$ is the nominal camera, $c_p$ is a perturbed camera, $l$ is the language instruction, $q$ is the proprioceptive state, and $a\in\mathbb{R}^{H\times A_{\rm act}}$ is the demonstrated action chunk, where $H$ is the prediction horizon and $A_{\rm act}$ is the number of active action dimensions. The two images differ only by scene camera; the task state, instruction, proprioception, and action target are shared. 

\textbf{Flow-level consistency.}
We use the convention that $t=1$ is noise and $t=0$ is the demonstrated action. $x_t$ denotes the action sample at flow time $t$, and $u_t$ the corresponding conditional velocity target.  With $\varepsilon\in\mathbb{R}^{H\times A_{\mathrm{act}}}$ drawn from $\mathcal{N}(0,I)$,
\begin{equation}
  x_t = t\varepsilon + (1-t)a,
  \qquad
  u_t = \varepsilon-a.
  \label{eq:flow-coordinates}
\end{equation}
For each action-equivalent pair, both views share $a$, $x_t$, and $u_t$. For $K$ sampled flow-time/noise points $\{(t_k,\varepsilon_k)\}_{k=1}^{K}$, define
\begin{equation}
\begin{aligned}
  x_{t_k}&=t_k\varepsilon_k+(1-t_k)a,
  \qquad
  u_{t_k}=\varepsilon_k-a,\\
  v_0^k&=v_\theta(x_{t_k},t_k\mid o_0,l,q),
  \qquad
  v_p^k=v_\theta(x_{t_k},t_k\mid o_p,l,q).
\end{aligned}
\label{eq:paired-flow-predictions}
\end{equation}
Both views are anchored to the demonstrated action through standard flow matching:
\begin{equation}
  \mathcal{L}_{\mathrm{FM}}^k
  =
  \tfrac{1}{2}
  \left(
    \lVert v_0^k-u_{t_k}\rVert^2
    +
    \lVert v_p^k-u_{t_k}\rVert^2
  \right).
  \label{eq:paired-fm-loss}
\end{equation}
The cross-view term compares the two velocity predictions on active action dimensions:
\begin{equation}
  \mathcal{L}_{\mathrm{CV}}^k
  =
  \left\lVert
    v_p^k[\,\cdot,\,:A_{\mathrm{act}}\,]
    -
    v_0^k[\,\cdot,\,:A_{\mathrm{act}}\,]
  \right\rVert^2 .
  \label{eq:paired-cv-loss}
\end{equation}
The per-pair training objective is
\begin{equation}
  \mathcal{L}
  =
  \frac{1}{K}\sum_{k=1}^{K}\mathcal{L}_{\mathrm{FM}}^k
  +
  \lambda_{\mathrm{CV}}
  \frac{1}{K}\sum_{k=1}^{K}\mathcal{L}_{\mathrm{CV}}^k.
  \label{eq:main_loss}
\end{equation}
where $\lambda_{\mathrm{CV}} > 0$ is a scalar weight controlling the strength of the cross-view regularization. The same $(t_k,\varepsilon_k)$ is shared across both views within each pair, and gradients flow through both branches bilaterally. 

\textbf{Mean--residual reformulation.}
Writing the two-view predictions in mean--residual coordinates $\bar v_k=\tfrac{1}{2}(v_0^k+v_p^k)$ and $\delta_k=\tfrac{1}{2}(v_0^k-v_p^k)$, the per-pair objective in~\eqref{eq:main_loss} admits, on the coordinates both terms cover, the equivalent form
\begin{equation}
  \mathcal{L}^k
  =
  \lVert\bar v_k-u_{t_k}\rVert^2
  +
  (1+4c\lambda_{\mathrm{CV}})\,\lVert\delta_k\rVert^2,
  \label{eq:mean-residual}
\end{equation}
using
$\tfrac{1}{2}\!\left(\lVert v_0^k-u_{t_k}\rVert^2+\lVert v_p^k-u_{t_k}\rVert^2\right)=\lVert\bar v_k-u_{t_k}\rVert^2+\lVert\delta_k\rVert^2$
and $\lVert v_p^k-v_0^k\rVert^2=4\lVert\delta_k\rVert^2$, where $c$ is the ratio of the normalizations of the two terms and equals one when both average over the same coordinates. The demonstrated action enters only through the mean prediction $\bar v_k$, while the cross-view term acts purely on the view-disagreement residual $\delta_k$. Adding $\mathcal{L}_{\mathrm{CV}}$ therefore introduces no new target; it raises the penalty on view-specific velocity residuals beyond standard flow matching by a factor of $1+4c\lambda_{\mathrm{CV}}$, and leaves the weight at one on coordinates it does not cover. In our implementation the flow-matching loss averages over the padded action width of $32$ and the cross-view term over the $A_{\mathrm{act}}=7$ active dimensions, so $c=32/7$ and $\lambda_{\mathrm{CV}}=0.10$ multiplies the weight on active-dimension disagreement by about $2.8$. Two consequences follow: $\mathcal{L}_{\mathrm{CV}}$ is not redundant with flow matching even when both views are anchored to the same target, and its effect concentrates on the part of the velocity field that is sensitive to scene-camera pose. The identity holds for any two predictions and says nothing about the parameters that produce them; whether nominal-camera success is preserved once the weight rises is a property of the trained network, and Section~\ref{sec:exp_sim_main} measures it.

\textbf{$\mathcal{L}_{\mathrm{CV}}$ is a behavior-level consistency signal.}
Feature-level consistency does not guarantee action agreement (Appendix~\ref{app:feature_alternatives}), and final-action matching would require backpropagation through the numerical sampler, entangling local velocity disagreement with integration effects. We therefore regularize the velocity field itself, the quantity directly integrated to produce action chunks. Under the usual local Lipschitz assumption, velocity disagreement along the integration trajectory bounds the divergence of the integrated action chunks. Letting $a^*_0(s)$ and $a^*_p(s)$ denote the action chunks obtained by Euler-integrating $v_\theta$ from the nominal and perturbed views respectively, and $x_t^{(0)}$ a point on the nominal integration trajectory,
\begin{equation}
\mathbb{E}_{s,\varepsilon}\!
\left[\lVert a^*_0(s)-a^*_p(s)\rVert^2\right]
\leq
C\,\mathbb{E}_{s,t,\varepsilon}\!
\left[\lVert v_\theta(x_t^{(0)},t\mid o_0,l,q)-v_\theta(x_t^{(0)},t\mid o_p,l,q)\rVert^2\right],
\label{eq:action-divergence-bound}
\end{equation}
where $C$ depends on the sampler grid and the Lipschitz constant (Appendix~\ref{app:mechanism_proof}). The training-time loss $\mathcal{L}_{\mathrm{CV}}$ in~\eqref{eq:paired-cv-loss} evaluates this disagreement at the conditional flow-matching interpolation points $x_{t_k}=t_k\varepsilon_k+(1-t_k)a$, which coincide with the integration trajectory under the optimal flow-matching solution and otherwise serve as a tractable surrogate for the right-hand side. The loss therefore targets the velocity field whose integration yields rollout behavior, not an auxiliary representation.

\textbf{Shuffled pairs are the mechanism control.}
A key control keeps flow matching intact while breaking action equivalence only inside $\cvloss$. For a batch of $N$ pairs indexed by $i$, let $\pi$ be a derangement of $\{1,\ldots,N\}$. The shuffled cross-view term replaces the matched perturbed prediction with one from a different state:
\begin{equation}
  \mathcal{L}_{\mathrm{CV,shuf}}^{k}
  =
  \left\lVert
  v_{p,\pi(i)}^k[\,\cdot,:A_{\mathrm{act}}\,]
  -
  v_{0,i}^k[\,\cdot,:A_{\mathrm{act}}\,]
  \right\rVert^2 .
  \label{eq:shuffled-cv}
\end{equation}
Every view still receives its correct row-local flow-matching target. At the population level, suppressing shared arguments for readability,
\begin{equation}
\mathbb{E}_{s,s'}\lVert v_p(s')-v_0(s)\rVert^2
=
\mathbb{E}_{s}\lVert v_p(s)-v_0(s)\rVert^2
+2\,\mathrm{Tr}\big(\mathrm{Cov}_{s}(v_p(s),v_0(s))\big).
\label{eq:shuffled-decomposition}
\end{equation}
The second term is state-marginal covariance, not view disagreement for the same physical state (proof in Appendix~\ref{app:mechanism_proof}). When $v_p$ and $v_0$ retain meaningful state-dependent action information, this covariance is typically positive for meaningful state-dependent policies; minimizing the shuffled objective therefore actively opposes the supervised flow-matching signal by pulling per-state velocity predictions toward a common state-marginal mean. A shuffled-pair control thus tests whether the cross-view loss depends on true action-equivalent pairing or acts merely as a generic smoothing penalty.

\section{Experiments}
\label{sec:experiments}

We evaluate whether cross-view action-flow consistency improves scene-camera viewpoint robustness in simulation and on a real robot, using the LIBERO-Plus camera-perturbation benchmark and a held-out camera placement protocol respectively. The simulation experiments use same-state rerendered LIBERO pairs for training and the official LIBERO-Plus camera-perturbation benchmark for evaluation. The real-robot experiment uses synchronized multi-camera demonstrations for training and evaluates deployment from a held-out camera placement not used during data collection. In all experiments, the wrist-camera stream is masked during both training and evaluation: the wrist view is not subject to the scene-camera perturbation, so allowing the policy to use it would introduce an unperturbed visual shortcut that confounds the causal interpretation of scene-camera robustness. This isolates scene-camera attribution but makes our absolute success rates incomparable to standard LIBERO-Plus numbers that retain the wrist view; all comparisons reported here are made within this single-scene-RGB protocol against identically evaluated baselines.

\subsection{Experimental setup}
\label{sec:exp_setup}

\textbf{Simulation training data.}
We construct action-equivalent pairs from the original LIBERO demonstrations~\citep{liu2023libero} by resetting the simulator to stored MuJoCo states and rerendering the same physical state under a nominal scene camera and a perturbed scene camera (Figure~\ref{fig:LIBERO-rendered}). Each pair shares the task, language instruction, robot state, and demonstrated actions; only the scene-camera pose changes. Perturbed cameras are sampled from the same LIBERO-Plus C1/C2/C3 camera families~\citep{fei2025liberoplus} (distance and scale, spherical position, endpoint orientation) used by the evaluation track. Evaluation uses the separate LIBERO-Plus benchmark rollouts; evaluation task initial states and action labels are not used for training. We generate $338{,}575$ same-state pairs from $2{,}000$ episodes across $40$ tasks in \texttt{libero\_spatial}, \texttt{libero\_object}, \texttt{libero\_goal}, and \texttt{libero\_10}, and use the resulting pair data for the final simulation policies.

\textbf{Implementation details.}
All policies are fine-tuned from the $\pi_{0.5}$ checkpoint for $10{,}000$ steps with AdamW and a cosine learning-rate schedule. Paired runs use batches of $192$ synchronized pairs, $224\times224$ images, and action horizon $H=10$. We apply independent color jitter to the two views and no spatial augmentation. The proposed method uses $K=2$ flow samples per pair, $\lambda_{\mathrm{CV}}=0.10$, and $t_k\sim\mathrm{Beta}(2,3)$, with $\lambda_{\mathrm{CV}}$ ramped from zero over the first $500$ steps; the rationale for this flow-time concentration is given in Appendix~\ref{app:implementation}.

\textbf{Evaluation protocol.}
The LIBERO-Plus camera-perturbation benchmark~\citep{fei2025liberoplus} contains $1{,}599$ camera-perturbation task instances. It is divided into C1 camera distance and scale ($n=313$), C2 spherical camera position ($n=992$), and C3 endpoint camera orientation ($n=294$). We run $4{,}797$ trials, corresponding to $3$ full rollouts for each individual evaluation. We also report performance on the standard LIBERO benchmark (denoted as ID) with nominal cameras ($40$ tasks $\times$ $50$ trials $=2{,}000$ rollouts per policy); all policies use the single-scene-RGB, wrist-masked condition described above.

\textbf{Comparisons.}
The nominal-only baseline fine-tunes the open-sourced base $\pi_{0.5}$ policy on standard LIBERO nominal-camera data and measures fixed-camera brittleness. The naive mixed-camera SFT baseline fine-tunes on LIBERO-Plus public camera-perturbation data with ordinary flow matching. The primary same-data control is FM-only training on our same-state pair dataset, which uses the same rendered frames, architecture, optimizer, and $10{,}000$-step budget as our method but sets $\lambda_{\mathrm{CV}}=0$. Ablations then modify the cross-view term, augmentation geometry, gradient coupling, or pair identity.

\subsection{Main simulation results on LIBERO-Plus}
\label{sec:exp_sim_main}


\begin{table}[h]
  \centering
  \scriptsize
  \caption{Main simulation results on LIBERO-Plus. ID: standard LIBERO nominal-camera benchmark. Camera: LIBERO-Plus camera-perturbation aggregate. Both the FM-only same-state pair-data control and the proposed method are reported as mean $\pm$ std over training seeds 42--44. The FM-only pair-data row differs from the proposed method only in $\lambda_{\mathrm{CV}}=0$.}
  \label{tab:main-sim}
  \setlength{\tabcolsep}{3pt}
  \begin{tabular}{llccccc}
    \toprule
    Method & Training data / objective & ID & Camera & C1 & C2 & C3 \\
    \midrule
    Nominal-only baseline
      & nominal LIBERO, FM only
      & 91.1 & 16.8 & 1.1 & 13.2 & 45.7 \\
    Naive mixed-camera SFT
      & LIBERO-Plus camera data, FM only
      & 78.7 & 74.7 & 68.4 & 75.7 & 78.2 \\
    FM-only on same-state pairs, seeds 42--44
      & rerendered pairs, FM only
      & $93.6{\pm}0.4$
      & $79.8{\pm}0.8$
      & $72.7{\pm}0.2$
      & $79.9{\pm}1.4$
      & $87.2{\pm}0.7$ \\
    \midrule
    Proposed, seeds 42--44
      & rerendered pairs, bilateral $K=2$
      & $94.2{\pm}0.8$
      & $\mathbf{87.2{\pm}0.4}$
      & $\mathbf{81.9{\pm}1.0}$
      & $\mathbf{87.9{\pm}0.5}$
      & $\mathbf{90.6{\pm}0.9}$ \\
    \bottomrule
  \end{tabular}
\end{table}

Table~\ref{tab:main-sim} summarizes the main results; the full aggregate table and per-seed results are in Appendix~\ref{app:full_tables} and Appendix~\ref{app:seed_results}, respectively. The proposed method reaches $87.2{\pm}0.4\%$ on the camera track, with $+7.4$pp over the FM-only same-data control ($79.8{\pm}0.8\%$) and $+12.5$pp over naive mixed-camera SFT. Both pair-data variants are trained from identical data and differ only in $\lambda_{\mathrm{CV}}$; the gap is consistent across seeds (Appendix~\ref{app:seed_results}). Nominal-camera ID success does not fall: the two means differ by $0.6$pp, within the spread over training seeds. The camera-track gains are largest on C1 ($+9.2$pp) and C2 ($+8.0$pp), and remain positive on C3 ($+3.4$pp), where the proposed method reaches $90.6{\pm}0.9\%$ compared with $87.2{\pm}0.7\%$ for FM-only.

\subsection{Mechanism analysis and ablations}
\label{sec:exp_ablations}

\begin{table}[t]
  \centering
  \scriptsize
  \caption{Ablations and controls isolating the source of the camera-track improvement.}
  \label{tab:mechanism}
  \setlength{\tabcolsep}{3pt}
  \renewcommand{\arraystretch}{1.15}
  \begin{tabular}{>{\raggedright\arraybackslash}p{0.14\linewidth}>{\raggedright\arraybackslash}p{0.33\linewidth}cc>{\raggedright\arraybackslash}p{0.26\linewidth}}
    \toprule
    Question & Configuration & ID & Camera & Takeaway \\
    \midrule
    Data exposure only?
      & FM-only on same-state pairs (3 seeds)
      & $93.6{\pm}0.4$ & $79.8{\pm}0.8$
      & Camera-diverse pair data alone is insufficient. \\
    Wrong pairs?
      & Shuffled bilateral $K{=}2$ + $\mathrm{Beta}(2,3)$
      & 50.8 & 25.8
      & Stronger wrong cross-view targets severely damage behavior. \\
    Complete recipe
      & Proposed bilateral $K{=}2$ + $\mathrm{Beta}(2,3)$, seeds 42--44
      & $94.2{\pm}0.8$ & $87.2{\pm}0.4$
      & Best observed robustness/ID tradeoff. \\
    \bottomrule
  \end{tabular}
\end{table}

\begin{figure}[h]
  \centering
  \begin{subfigure}[t]{0.25\linewidth}
    \centering
    \includegraphics[width=\linewidth]{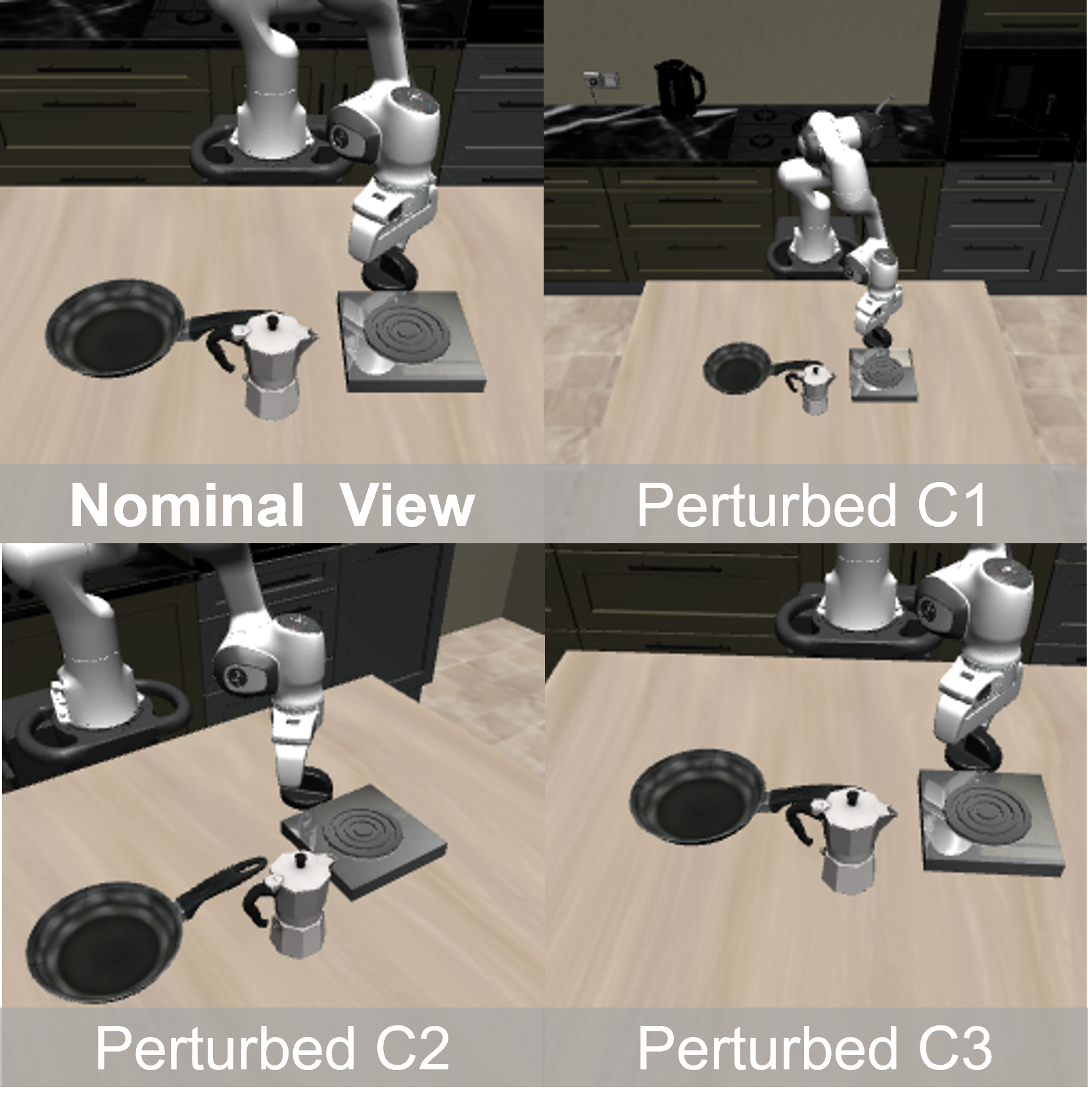}
    \caption{Perturbed Views.}
    \label{fig:LIBERO-rendered}
  \end{subfigure}
  \begin{subfigure}[t]{0.40\linewidth}
    \centering
    \includegraphics[width=\linewidth]{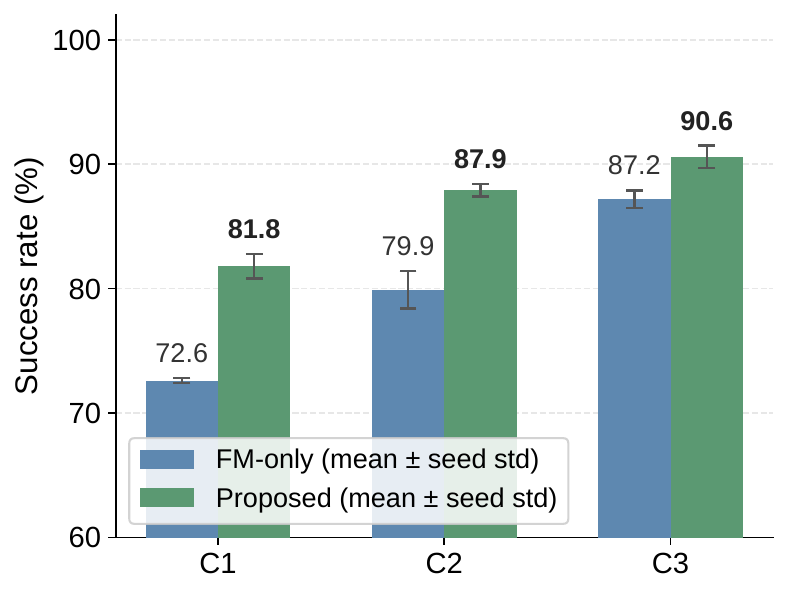}
    \caption{Per-category breakdown.}
    \label{fig:category-breakdown}
  \end{subfigure}
    \caption{LIBERO-Plus camera-track analysis. (a) Rendered nominal and perturbed scene-camera views. (b) Per-category success rates of FM-only vs.\ the proposed method, both shown as mean $\pm$ seed std over seeds 42--44.}
  \label{fig:category-and-shuffle}
\end{figure}

Table~\ref{tab:mechanism} isolates the source of the camera-track improvement, with supporting evidence in Figure~\ref{fig:category-and-shuffle}. Removing the cross-view term while preserving the same pair data drops camera-track performance from $87.2{\pm}0.4\%$ to $79.8{\pm}0.8\%$, a $7.4$pp gap attributable solely to $\lambda_{\mathrm{CV}}$ (per-seed results in Appendix~\ref{app:seed_results}). Because both variants share the same data, this gap reflects $\mathcal{L}_{\mathrm{CV}}$ alone, independent of any train/eval camera overlap.

A stronger test breaks action equivalence inside $\mathcal{L}_{\mathrm{CV}}$ by shuffling pairs across the batch, so that the loss compares velocity predictions from different physical states, each evaluated at the flow coordinate of its own state. Per-view flow-matching targets remain correct, but the cross-view signal becomes incoherent. Camera-track performance collapses from $87.2\%$ to $25.8\%$, and ID success falls with it. A generic smoothing of the velocity field would be indifferent to which state supplies the partner; the collapse is what Eq.~\eqref{eq:shuffled-decomposition} predicts when the predictions carry state information. The method requires same-state correspondence to be useful. Sensitivity to $\lambda_{\mathrm{CV}}$ is reported in Appendix~\ref{app:full_tables}. 

\subsection{Robustness beyond the paired support}
\label{sec:exp_support}

In Section~\ref{sec:exp_sim_main} the perturbed camera of every pair is drawn from the same distribution of poses that the evaluation track covers. Collecting demonstrations from that many viewpoints is impractical in deployment, so a deployed policy will meet camera poses that no pair covered. This section trains on pairs restricted to part of each perturbation axis and evaluates on the withheld part, and asks whether the cross-view term extrapolates beyond the pairs it saw.

Along each perturbation axis we hold out two bands of camera poses, one in the interior of the axis and one beyond its outer edge, and keep the rest. For elevation, the perturbed side of each training pair is fixed at $8^\circ$ relative to the nominal $0^\circ$ view, while the $15^\circ$ elevation is held out entirely. On the restricted pairs we retrain the FM-only control and the proposed objective with two training seeds (42 and 43), and we evaluate both on the full camera track. Each evaluation camera then belongs to one of four groups: inside the restricted support, in an interior held-out band (interpolation), beyond the outer edge (extrapolation), or at the held-out elevation (transfer). The spherical family is split into azimuth, elevation, and their compound. The extrapolation bands hold $300$, $297$, and $678$ evaluation rollouts per policy on azimuth, camera distance, and endpoint rotation, and the three-axis extrapolation composite pools all $1{,}275$.

The cross-view term improves the policy at viewpoints it never trained on. In both seeds it beats FM-only at extrapolation on azimuth, endpoint rotation, and camera distance, and the three-axis composite is positive in both. The margin is largest on azimuth. Inside the restricted support the two objectives are close, and on azimuth interpolation FM-only is slightly ahead. The term adds little where the training pairs already cover the viewpoint and adds most beyond them.

Elevation is the exception. Although the restricted training pairs contain a $0^\circ\!\leftrightarrow\!8^\circ$ elevation displacement, the consistency model performs worse than FM-only at the held-out $15^\circ$ elevation in both seeds. Thus, the positive extrapolation observed on azimuth, camera distance, and endpoint rotation does not extend uniformly to every camera factor. The same restricted-support setting also lowers nominal-camera success in both seeds, a cost absent under full-support training (Table~\ref{tab:main-sim}).

\subsection{Real-robot held-out camera evaluation}
\label{sec:exp_realrobot}

\begin{figure}[b]
  \centering
  \includegraphics[width=0.8\linewidth]{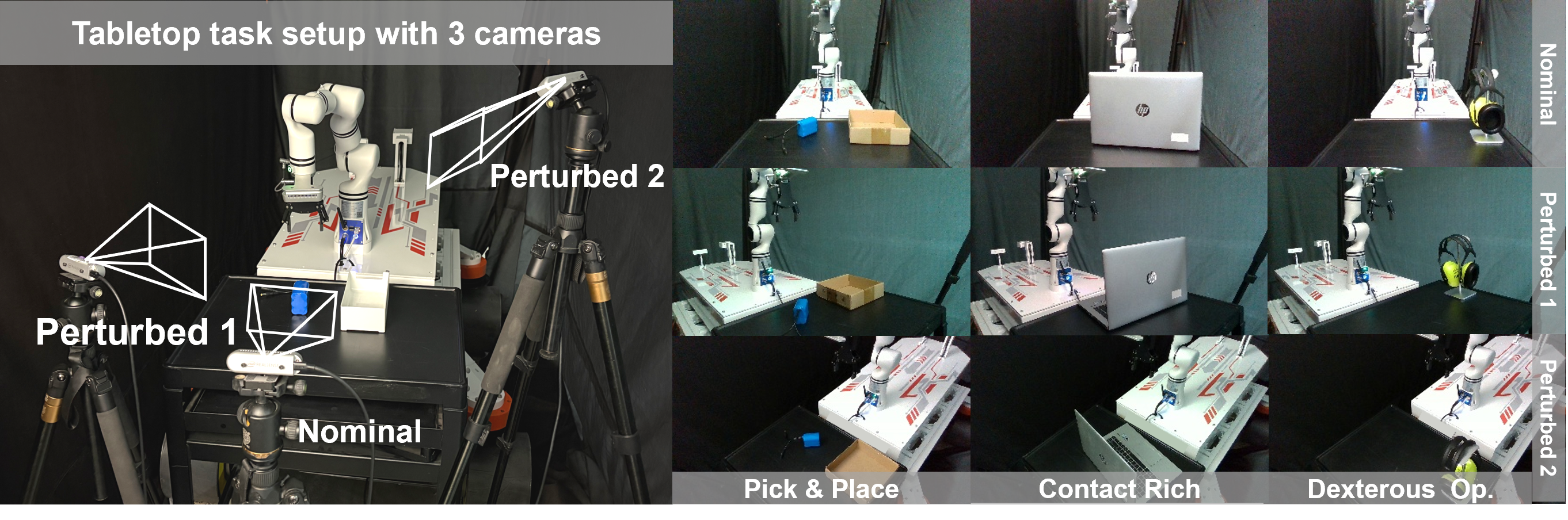}
  \caption{Real-robot synchronized multi-camera setup. Training demonstrations are collected with synchronized scene cameras $C_0,C_1,C_2$ to form action-equivalent pairs. At deployment, the policy receives only a single RGB scene image. The key evaluation condition places the deployment camera at a position never observed during training.}
  \label{fig:realrobot-setup}
\end{figure}

We collect synchronized multi-camera demonstrations for three tabletop tasks: Pick \& Place (blue battery into cardboard box), Contact Rich (closing a laptop lid), and Dexterous Operation (taking headphones off a stand). The three synchronized cameras at each timestep provide the real-hardware analog of the simulator-state-reset pairs in Section~\ref{sec:exp_sim_main}; independently collected trajectories from different placements are not used as pairs, since they are not action-equivalent.

\begin{figure}[h]
  \centering
  \includegraphics[width=0.7\linewidth]{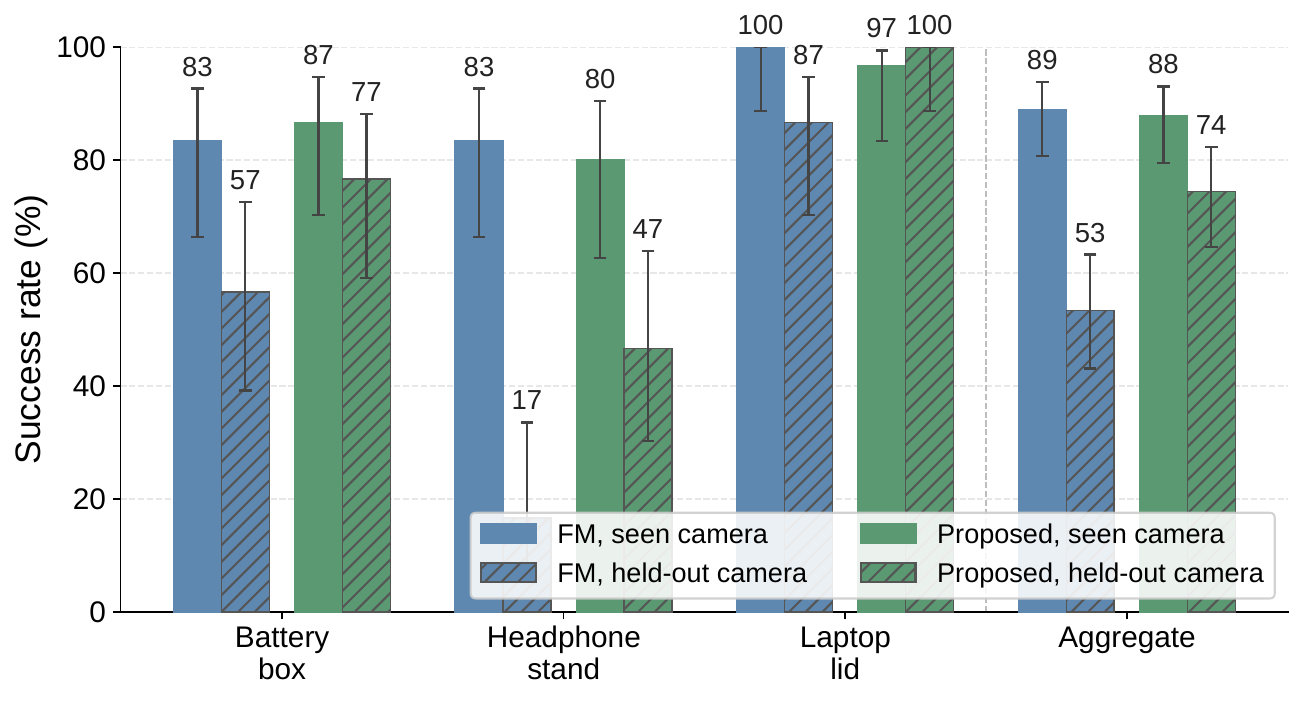}
  \caption{Task-level and aggregate success rates for seen and held-out camera placements.}
  \label{fig:realrobot-results}
\end{figure}

At deployment, each policy receives one RGB scene image, language, and proprioception, with the wrist stream masked. We evaluate from three seen training-camera placements ($C_0,C_1,C_2$ in Figure~\ref{fig:realrobot-setup}) and three held-out placements never used during data collection: $H_0$ (a farther distance extrapolation from $C_0$), $H_1$ (an azimuth rotation around the table center between $C_0$ and $C_1$), and $H_2$ (a higher and more top-down placement between $C_0$ and $C_2$); the six placements are visualized in Figure~\ref{fig:realrobot_views} (Appendix~\ref{app:realrobot_per_placement}). We run $10$ rollouts per task--placement cell ($90$ rollouts per method per condition).

Figure~\ref{fig:realrobot-results} reports aggregate and task-level results; full counts and Wilson intervals are in Appendix~\ref{app:realrobot}. On held-out cameras, FM-only multi-view training reaches $48/90$ aggregate successes, while the proposed objective reaches $67/90$ (descriptive two-proportion test, $p<0.005$; we treat per-task/per-placement counts as primary). The seen-camera condition shows that both methods can use views represented in the demonstration data ($80/90$ for FM-only, $79/90$ for the proposed method); the held-out-camera gap separates the two training signals. Gains appear on all three tasks but their magnitudes are uneven: the Laptop lid task transfers nearly losslessly under the proposed objective ($30/30$ held-out), the Battery box task improves from $17/30$ to $23/30$, and the Headphone stand task carries the largest absolute gap, $5/30$ to $14/30$. 

Two patterns emerge from the per-placement breakdown (Appendix~\ref{app:realrobot}). First, the held-out gain grows with the geometric aggressiveness of the placement: the largest task--placement margins occur on $H_2$, the highest and most top-down held-out view. Second, the Headphone stand task on $H_2$ is the single most informative case: FM-only succeeds in $0/10$ rollouts while the proposed objective succeeds in $4/10$, suggesting that the cross-view objective adds the most value where held-out viewpoint is most distant from training. Even with this gain, $16/30$ Headphone stand rollouts still fail: failures cluster around fine gripper--headphone alignment near contact, where held-out views occlude the thin band and small approach-angle errors cause mis-grasps (Appendix~\ref{app:realrobot}).

\section{Limitations}
\label{sec:discussion}

\textbf{Paired training data.} The method requires true action-equivalent pairs: in simulation by resetting MuJoCo states and rerendering, on hardware by synchronized cameras observing the same physical state at the same timestamp. This requirement excludes most existing single-camera robot datasets, in which different views typically come from different rollouts. Approximate pairs constructed from independently collected trajectories are not a safe shortcut: by the shuffled-pair decomposition in Section~\ref{sec:method}, enforcing consistency between non-equivalent states pushes per-state velocity predictions toward a state-marginal mean and actively opposes the supervised flow-matching signal, as confirmed empirically by the collapse to $25.8\%$ in Table~\ref{tab:mechanism}. Reducing this is our direction for future work (Appendix~\ref{app:future_work}).

\textbf{Observability under single RGB.} Cross-view consistency improves action agreement across views of the same state but cannot recover information not visually present in the input RGB; when held-out cameras occlude task-critical geometry or compress depth cues near contact, the objective has nothing to anchor against. This matches our inference-contract choice (Appendix~\ref{app:scope_comparison}), no depth, point clouds, RGB-D, or tactile sensing at deployment, and combining cross-view consistency with these is a direct extension.

\textbf{Scope of evaluation.} The hardware study is intentionally controlled: three tabletop tasks, static external scene cameras, three seen and three held-out placements, and $10$ rollouts per task---placement cell. It does not establish robustness beyond LIBERO-Plus C1/C2/C3 shifts, moving cameras, lighting changes, clutter, transparent/deformable objects, or mobile-base manipulation. Extending the evaluation protocol to these conditions, as well as transferring the action-flow consistency objective to discrete-token VLA backbones and other action-head families, is left to future work.

\section{Conclusion}
\label{sec:conclusion}

Camera motion can change a VLA's action predictions even when the physical task state is unchanged. For flow-based VLAs, we address this by regularizing the action-flow velocity field across action-equivalent views; on LIBERO-Plus and real-robot held-out-camera evaluations, this training-only objective improves over nominal-only, camera-diverse SFT, and same-data FM-only baselines while preserving single-scene-RGB inference, and the gain extends to camera poses that the training pairs never covered.

\acknowledgments{}

\bibliography{reference}

\appendix
\clearpage

\section{Dataset and Evaluation Details}
\label{app:data_eval}

\subsection{Same-state simulation pair construction}
We use original LIBERO demonstrations as the source for same-state pair construction. For a selected demonstration timestep $t$, we load the corresponding environment, reset the simulator to the stored MuJoCo state, call \texttt{sim.forward()}, and render two scene-camera observations: a nominal view and a perturbed view. The row metadata records suite, task, episode, timestep, state hash, action-chunk hash, camera category, and camera parameters. A matched pair is valid only if the two slots share the same simulator state, action chunk, language instruction, and proprioceptive state.

The generated dataset contains $338{,}575$ same-state pairs from $2{,}000$ episodes across $40$ tasks in \texttt{libero\_spatial}, \texttt{libero\_object}, \texttt{libero\_goal}, and \texttt{libero\_10}. Camera categories follow LIBERO-Plus: C1 distance/scale, C2 spherical camera position, and C3 endpoint orientation. The real-robot dataset follows the same logical structure, but exact same-state observations are obtained from synchronized cameras rather than simulator reset.

\subsection{Train/evaluation separation}
The simulation training pairs are generated from original LIBERO demonstration states, with perturbed scene cameras drawn from the same C1/C2/C3 camera families and pose distribution used by the LIBERO-Plus camera track. Evaluation is performed on the separate official LIBERO-Plus benchmark rollouts; evaluation task initial states and action labels are not used for training, and we do not evaluate on a held-out split of the rendered training-pair dataset itself. The full-support experiment therefore tests generalization to separate benchmark rollouts under the same camera-perturbation family, while Section~\ref{sec:exp_support} explicitly tests camera poses outside the restricted training support. No wrist image is used in any reported simulation policy, because LIBERO-Plus perturbs the scene camera but leaves wrist observations stable; retaining wrist images would no longer isolate scene-camera robustness. All training and evaluation use a 6$\times$ NVIDIA RTX PRO 6000 Server Edition GPU setup; detailed hyperparameters are documented in the open-sourced code repository linked from the project page.

\subsection{Rollout metrics}
All rollout success rates are binomial proportions. Single-seed baseline and ablation rows are reported with rollout-level Wilson 95\% confidence intervals. The primary FM-only same-state-pair control and the proposed method are reported as mean $\pm$ standard deviation over training seeds 42--44; their seed-level results are given in Appendix~\ref{app:seed_results}.

\section{Full Simulation Tables}
\label{app:full_tables}

\begin{table}[h]
  \centering
  \scriptsize
  \caption{Full simulation table. Single-seed baseline rows are reported with rollout-level Wilson 95\% confidence intervals on the camera track; ID is over $2{,}000$ rollouts per policy. The FM-only same-state pair control and the proposed method are both reported as mean $\pm$ standard deviation across training seeds 42--44; per-seed values for both are in Table~\ref{tab:app_per_seed_fm_proposed}.}
  \label{tab:app_full_sim}
  \setlength{\tabcolsep}{8pt}
  \begin{tabular}{lccccc}
    \toprule
    Method & ID & Camera & C1 & C2 & C3 \\
    \midrule
    Nominal-only baseline & $91.1$ & $16.8$ [15.8, 17.9] & $1.1$ [0.6, 1.9] & $13.2$ [12.0, 14.5] & $45.7$ [42.4, 49.0] \\
    Naive mixed-camera SFT & $78.7$ & $74.7$ [73.4, 75.9] & $68.4$ [65.3, 71.3] & $75.7$ [74.1, 77.2] & $78.2$ [75.4, 80.8] \\
    FM-only on same-state pairs, seeds 42--44 & $93.6{\pm}0.4$ & $79.8\pm0.8$ & $72.7\pm0.2$ & $79.9\pm1.4$ & $87.2\pm0.7$ \\
    Proposed, seeds 42--44 & $94.2\pm0.8$ & $87.2\pm0.4$ & $81.9\pm1.0$ & $87.9\pm0.5$ & $90.6\pm0.9$ \\
    \bottomrule
  \end{tabular}
\end{table}

\begin{table}[h]
  \centering
  \normalsize 
  \caption{Extended ablations on the camera track. The $\lambda_{\mathrm{CV}}$ sweep is run in an exploratory single-sample, independent-augmentation regime, distinct from the final $K{=}2$ configuration with no spatial augmentation in which $\lambda_{\mathrm{CV}}{=}0.10$ was selected; its absolute numbers are therefore not directly comparable to Table~\ref{tab:main-sim}. ID is omitted for these exploratory runs.}
  \label{tab:app_extended_ablations}
  \setlength{\tabcolsep}{10pt}
  \begin{tabular}{lc}
    \toprule
    Configuration & Camera \\
    \midrule
    \multicolumn{2}{l}{\emph{$\lambda_{\mathrm{CV}}$ sensitivity, single-sample symmetric, independent spatial augmentation}} \\
    \quad $\lambda=0.05$ & $81.1$ [80.0, 82.2] \\
    \quad $\lambda=0.10$ & $80.9$ [79.8, 82.0] \\
    \quad $\lambda=0.20$ & $73.6$ [72.5, 74.7] \\
    \quad $\lambda=0.50$ & $68.1$ [67.0, 69.2] \\
    \midrule
    \multicolumn{2}{l}{\emph{Pair augmentation regime, single-sample symmetric, $\lambda=0.10$}} \\
    \quad Independent spatial augmentation & $80.9$ [79.8, 82.0] \\
    \quad Pair-consistent spatial handling & $84.9$ [83.9, 85.9] \\
    \midrule
    \multicolumn{2}{l}{\emph{Matched vs. shuffled, final $K=2$ recipe}} \\
    \quad Shuffled bilateral $K=2$ + $\mathrm{Beta}(2,3)$ & $25.8$ [24.4, 27.2] \\
    \bottomrule
  \end{tabular}
\end{table}
On the camera track, $\lambda_{\mathrm{CV}}{=}0.05$ and $0.10$ lie within overlapping rollout-level Wilson intervals ($81.1$ [$80.0,82.2$] vs.\ $80.9$ [$79.8,82.0$]); camera-track performance is thus insensitive to $\lambda_{\mathrm{CV}}$ across $[0.05,0.10]$ in this regime, and we carry $\lambda_{\mathrm{CV}}{=}0.10$ to the final configuration.

\section{Seed-Level Results for FM-only and Proposed}
\label{app:seed_results}

We train both the FM-only same-state pair control ($\lambda_{\mathrm{CV}}=0$) and the proposed bilateral $K=2$ + $\mathrm{Beta}(2,3)$ configuration with three independent training seeds (42, 43, 44) and evaluate each on the full LIBERO-Plus camera-perturbation track. The two methods share identical training data, architecture, optimizer, and step budget; they differ only in $\lambda_{\mathrm{CV}}$.

\begin{table}[!h]
  \centering
  \scriptsize
  \caption{Per-seed camera-track results for the FM-only same-state pair control and the proposed bilateral $K=2$ + $\mathrm{Beta}(2,3)$ configuration. Camera denotes the LIBERO-Plus camera-perturbation aggregate; ID means over seeds are in Table~\ref{tab:app_full_sim}. Means and standard deviations are computed from exact rollout counts, so they may differ in the last digit from the average of the rounded per-seed values.}
  \label{tab:app_per_seed_fm_proposed}
  \setlength{\tabcolsep}{6pt}
  \begin{tabular}{llcccc}
    \toprule
    Method & Seed & Camera & C1 & C2 & C3 \\
    \midrule
    \multirow{4}{*}{FM-only on same-state pairs}
      & 42 & 79.5 & 72.8 & 79.4 & 86.6 \\
      & 43 & 80.7 & 72.5 & 81.5 & 87.0 \\
      & 44 & 79.2 & 72.6 & 78.7 & 88.0 \\
      & Mean $\pm$ std.\ & $79.8\pm0.8$ & $72.7\pm0.2$ & $79.9\pm1.4$ & $87.2\pm0.7$ \\
    \midrule
    \multirow{4}{*}{Proposed, bilateral $K=2$ + $\mathrm{Beta}(2,3)$}
      & 42 & 86.9 & 80.7 & 88.1 & 89.7 \\
      & 43 & 87.6 & 82.2 & 88.3 & 90.7 \\
      & 44 & 87.2 & 82.6 & 87.3 & 91.5 \\
      & Mean $\pm$ std.\ & $87.2\pm0.4$ & $81.9\pm1.0$ & $87.9\pm0.5$ & $90.6\pm0.9$ \\
    \bottomrule
  \end{tabular}
\end{table}

Across seeds, the proposed method's per-seed Camera score (lowest $86.9\%$, highest $87.6\%$) is strictly above the FM-only same-state pair control's per-seed Camera score (lowest $79.2\%$, highest $80.7\%$). The matched-seed gap ranges from $+6.9$pp (seed 43) to $+8.0$pp (seed 44); moreover, the lowest proposed seed still exceeds the highest FM-only seed by $+6.2$pp. Category-level gains are positive on all three camera perturbation types: $+9.2$pp on C1, $+8.0$pp on C2, and $+3.4$pp on C3. On ID the two means differ by $0.6$pp, within the spread over training seeds, so the cross-view term does not cost nominal-camera success. A paired cluster bootstrap over the $1{,}599$ camera-task instances ($100{,}000$ resamples) puts the mean camera-track gain between $6.5$pp and $8.3$pp.

\section{Real-Robot Experiments and Full Tables}
\label{app:realrobot}

\subsection{Hardware setup and synchronization}
Real-robot demonstrations are collected with three synchronized external scene cameras $C_0,C_1,C_2$. For a demonstration timestep $t$, the tuple $(C_0[t],C_1[t],C_2[t],l,q[t],a[t])$ is treated as action-equivalent supervision because all cameras observe the same physical state at the same timestamp. Trajectories collected independently from different camera placements are not used for the cross-view term, because they do not guarantee same-state action equivalence. Table~\ref{tab:realrobot-details} summarizes the real-robot data collection, training, and checkpoint-selection protocol. Held-out camera placements are reserved strictly for evaluation and are not used for training or checkpoint selection. 

\begin{table}[h]
  \centering
  \small
  \caption{Real-robot data collection and training details. Held-out camera placements are used only for evaluation and never for training or checkpoint selection.}
  \label{tab:realrobot-details}
  \setlength{\tabcolsep}{4pt}
  \renewcommand{\arraystretch}{1.12}
  \begin{tabular}{p{0.34\linewidth}p{0.58\linewidth}}
    \toprule
    Item & Setting \\
    \midrule
    Robot and action space
      & RealMan RM-75 7-DoF manipulator; delta end-effector pose actions, controlled at 15 Hz. \\

    Tasks
      & Battery box; Headphone stand; Laptop lid. \\

    Training cameras
      & Three synchronized external scene cameras, denoted $C_0$, $C_1$, and $C_2$. \\

    Held-out cameras
      & $H_0$, $H_1$, and $H_2$; used only for evaluation. \\

    Demonstrations
      & 70 demonstrations per task, collected with domain randomization. \\

    Training samples
      & Each demonstration timestep is recorded by three synchronized cameras. Action-equivalent training pairs are constructed from these synchronized views. \\

    Image preprocessing
      & Intel RealSense D435i cameras; RGB images are captured at $640 \times 480$ resolution and 15 fps. \\

    Initialization
      & $\pi_{0.5}$ checkpoint. \\

    FM-only training
      & Same real-robot multi-view data and training budget as the proposed method, with $\lambda_{\mathrm{CV}}=0$. Training steps, batch size, optimizer, and learning-rate schedule follow the simulation setting. \\

    Proposed training
      & Cross-view action-flow consistency on synchronized action-equivalent pairs. Hyperparameters follow the simulation setting. \\

    Checkpoint selection
      & Final checkpoint after training; no held-out-camera rollouts are used for checkpoint selection. \\

    Evaluation budget
      & 10 rollouts per task--placement cell; 90 seen-camera and 90 held-out-camera rollouts per method. \\
    \bottomrule
  \end{tabular}
\end{table}

\subsection{Tasks and rollout budget}
The three tasks are: blue battery into cardboard box, take the headphones off the stand, and close laptop lid. For each method and deployment camera condition, we run $30$ rollouts per task ($3$ camera placements $\times$ $10$ rollouts each). With two core methods and two deployment camera conditions, the total core evaluation is $2\times3\times2\times30=360$ rollouts.

\begin{table}[h]
  \centering
  \scriptsize
  \caption{Real-robot task-level breakdown. Each cell reports success rate with Wilson 95\% confidence interval and success count.}
  \label{tab:app_real_task}
  \setlength{\tabcolsep}{3pt}
  \begin{tabular}{llcccc}
    \toprule
    Method & Camera & Battery box & Headphone stand & Laptop lid & Aggregate \\
    \midrule
    FM, multi-view mixed & Seen
      & 83.3 [66.4, 92.7] (25/30)
      & 83.3 [66.4, 92.7] (25/30)
      & 100 [88.7, 100] (30/30)
      & 88.9 [80.7, 93.8] (80/90) \\
    FM, multi-view mixed & Held-out
      & 56.7 [38.9, 72.9] (17/30)
      & 16.7 [7.3, 33.6] (5/30)
      & 86.7 [70.3, 94.7] (26/30)
      & 53.3 [43.1, 63.3] (48/90) \\
    Proposed & Seen
      & 86.7 [70.3, 94.7] (26/30)
      & 80.0 [62.7, 90.5] (24/30)
      & 96.7 [83.3, 99.4] (29/30)
      & 87.8 [79.4, 93.0] (79/90) \\
    Proposed & Held-out
      & 76.7 [59.1, 88.2] (23/30)
      & 46.7 [30.2, 63.9] (14/30)
      & 100 [88.7, 100] (30/30)
      & 74.4 [64.6, 82.4] (67/90) \\
    \bottomrule
  \end{tabular}
\end{table}

\subsection{Per-placement task-level results}
\label{app:realrobot_per_placement}

Table~\ref{tab:app_real_per_placement} expands the aggregate results in Table~\ref{tab:app_real_task} to per-placement task-level success counts. $H_0$ extrapolates camera distance away from $C_0$; $H_1$ rotates azimuthally around the table center between $C_0$ and $C_1$; $H_2$ raises and tilts the view between $C_0$ and $C_2$.

\begin{table}[h]
  \centering
  \scriptsize
  \caption{Per-placement task-level success counts (out of $10$ rollouts per cell). Seen placements are $C_0,C_1,C_2$; held-out placements are $H_0,H_1,H_2$.}
  \label{tab:app_real_per_placement}
  \setlength{\tabcolsep}{4pt}
  \begin{tabular}{llcccccc}
    \toprule
    & & \multicolumn{3}{c}{Seen} & \multicolumn{3}{c}{Held-out} \\
    \cmidrule(lr){3-5} \cmidrule(lr){6-8}
    Task & Method & $C_0$ & $C_1$ & $C_2$ & $H_0$ & $H_1$ & $H_2$ \\
    \midrule
    \multirow{2}{*}{Battery box}
      & FM       & 9 & 8 & 8 & 8 & 6 & 3 \\
      & Proposed & 10 & 9 & 7 & 9 & 8 & 6 \\
    \midrule
    \multirow{2}{*}{Laptop lid}
      & FM       & 10 & 10 & 10 & 9 & 9 & 8 \\
      & Proposed & 10 & 10 & 9 & 10 & 10 & 10 \\
    \midrule
    \multirow{2}{*}{Headphone stand}
      & FM       & 8 & 9 & 8 & 3 & 2 & 0 \\
      & Proposed & 7 & 9 & 8 & 5 & 5 & 4 \\
    \bottomrule
  \end{tabular}
\end{table}

\begin{figure}[t]
  \centering
  \begin{subfigure}[t]{0.32\linewidth}
    \centering
    \includegraphics[width=\linewidth]{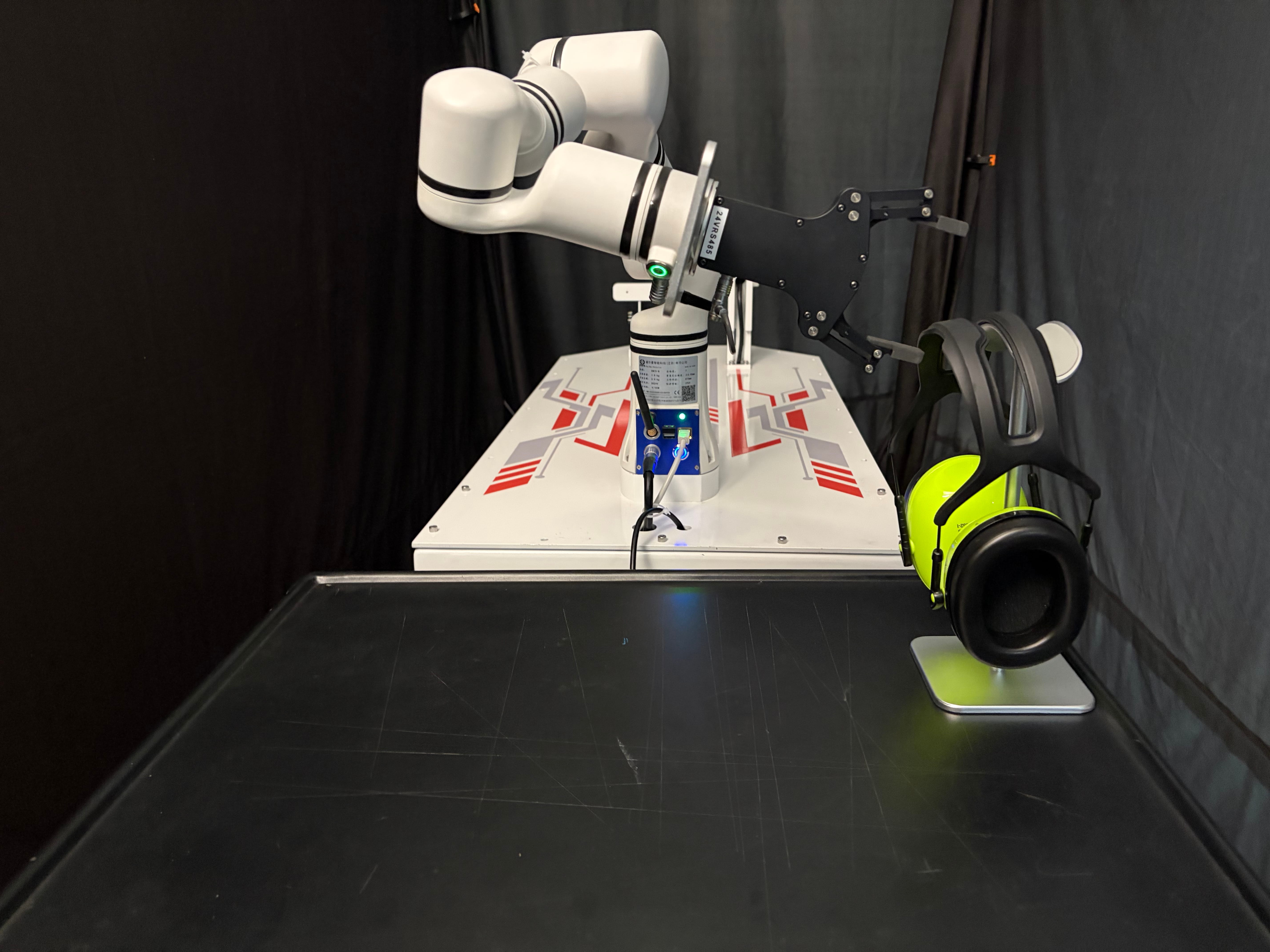}
    \caption{$C_0$ (seen)}
    \label{fig:realrobot_views_C0}
  \end{subfigure}
  \begin{subfigure}[t]{0.32\linewidth}
    \centering
    \includegraphics[width=\linewidth]{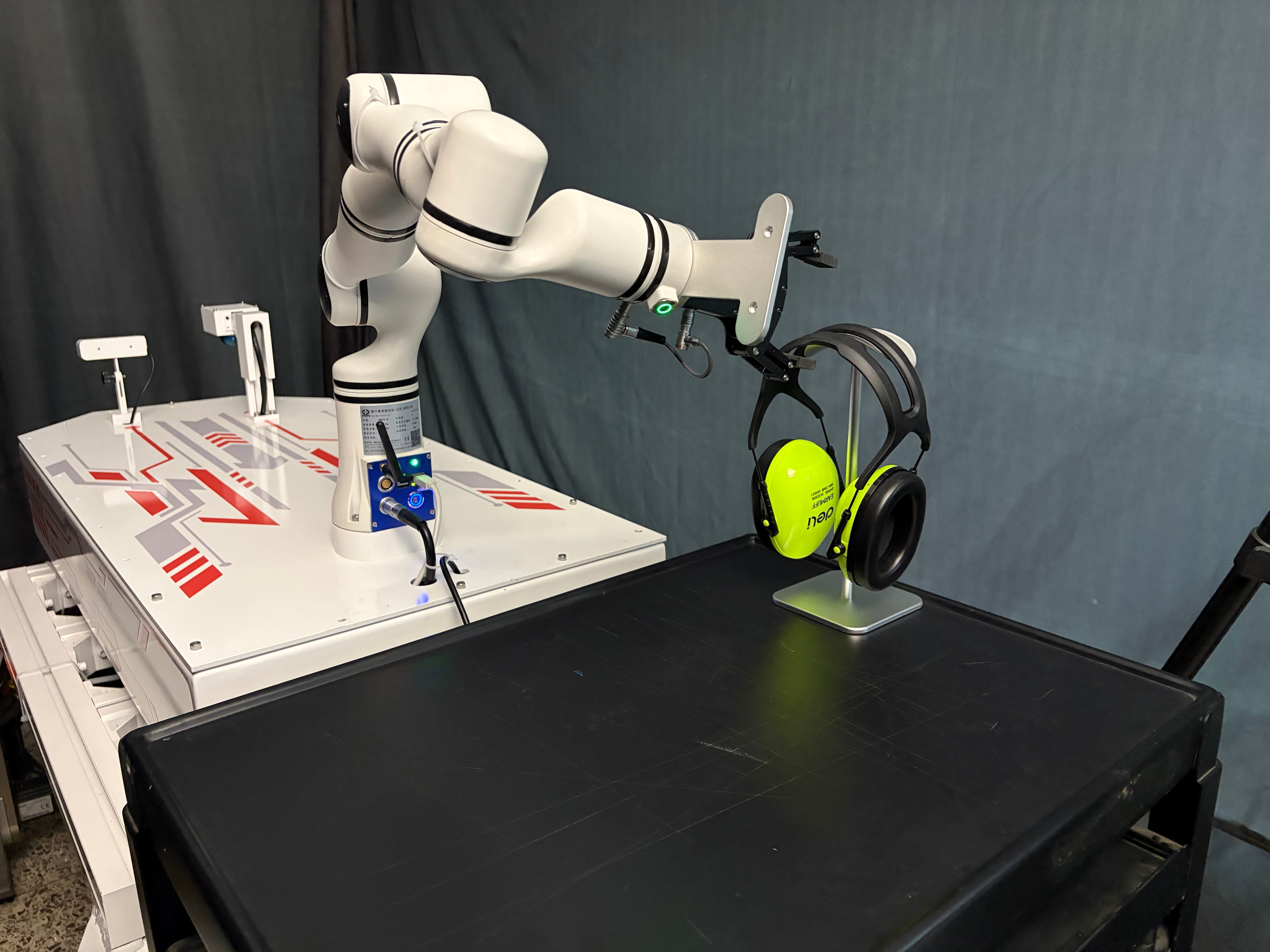}
    \caption{$C_1$ (seen)}
    \label{fig:realrobot_views_C1}
  \end{subfigure}
  \begin{subfigure}[t]{0.32\linewidth}
    \centering
    \includegraphics[width=\linewidth]{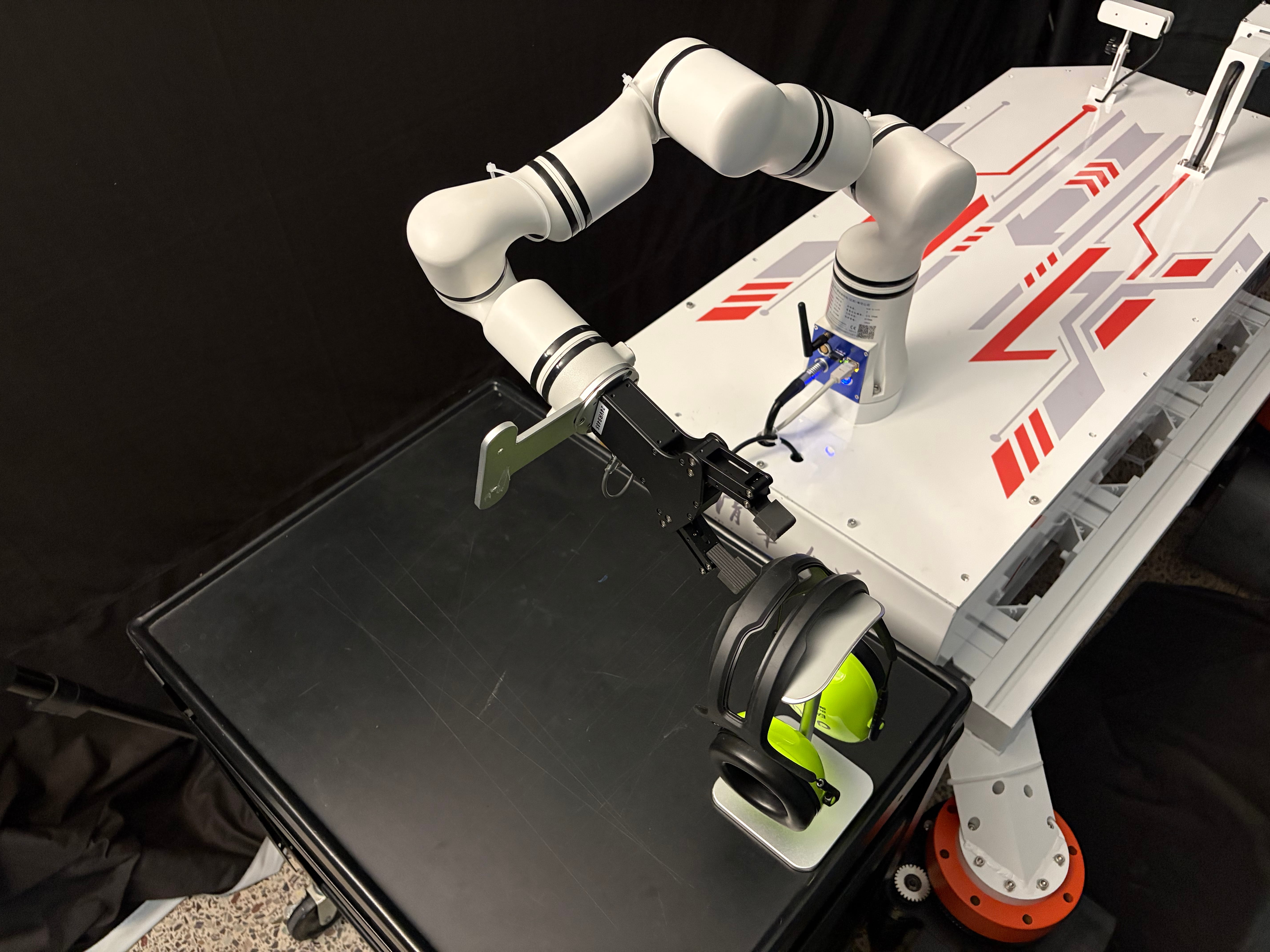}
    \caption{$C_2$ (seen)}
    \label{fig:realrobot_views_C2}
  \end{subfigure}
  \\[2pt]
  \begin{subfigure}[t]{0.32\linewidth}
    \centering
    \includegraphics[width=\linewidth]{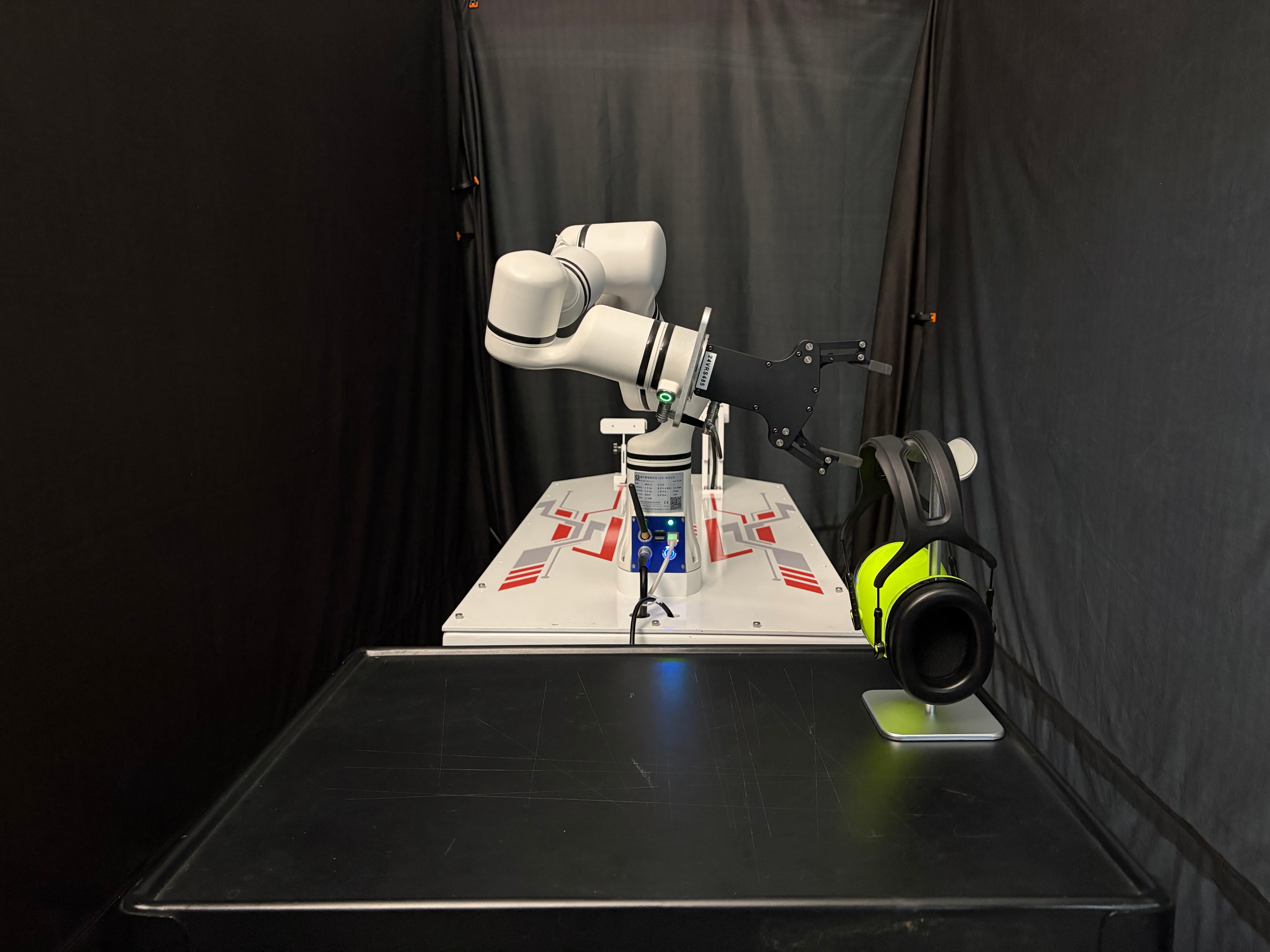}
    \caption{$H_0$ (held-out, farther)}
    \label{fig:realrobot_views_H0}
  \end{subfigure}
  \begin{subfigure}[t]{0.32\linewidth}
    \centering
    \includegraphics[width=\linewidth]{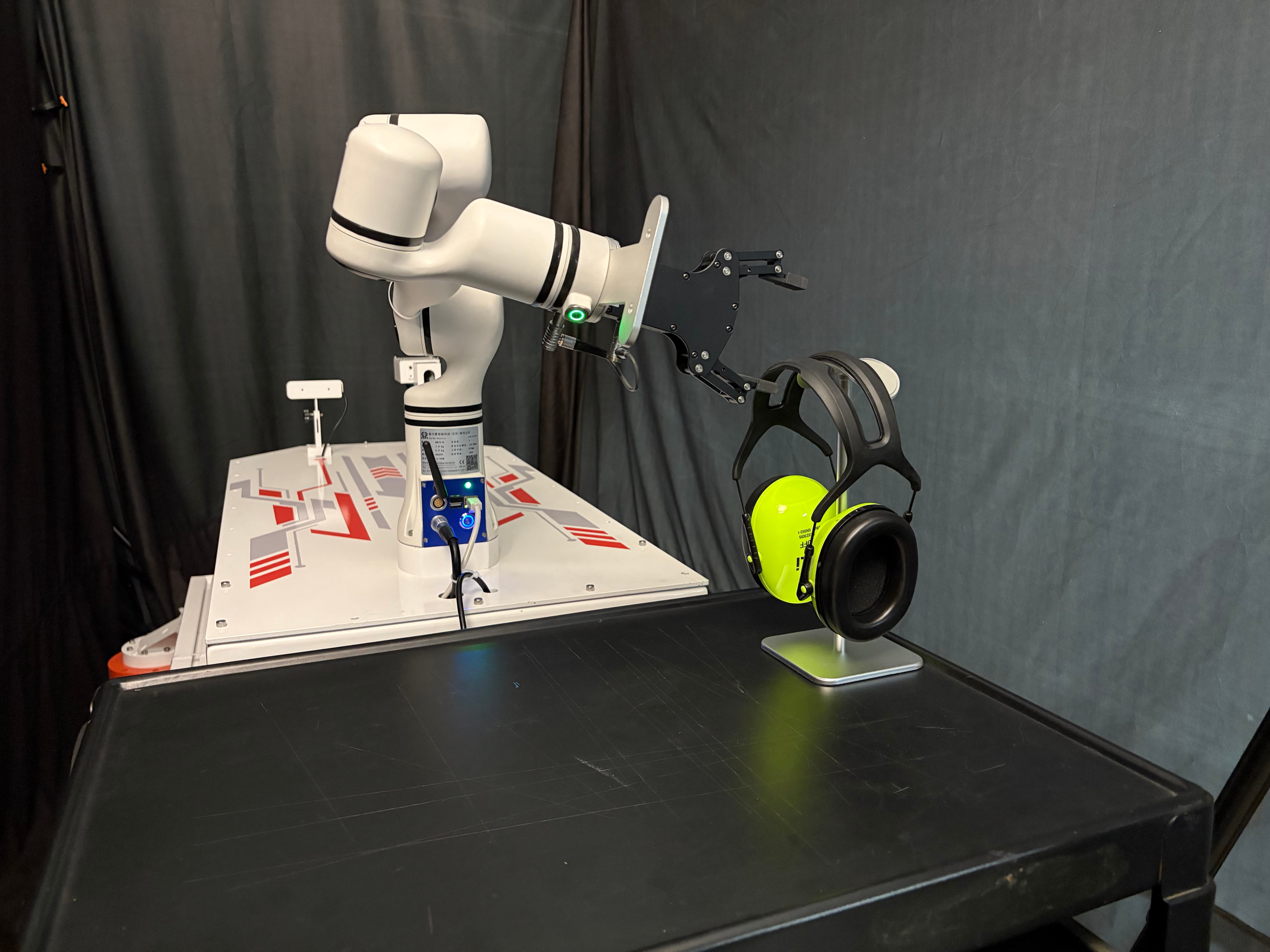}
    \caption{$H_1$ (held-out, azimuth)}
    \label{fig:realrobot_views_H1}
  \end{subfigure}
  \begin{subfigure}[t]{0.32\linewidth}
    \centering
    \includegraphics[width=\linewidth]{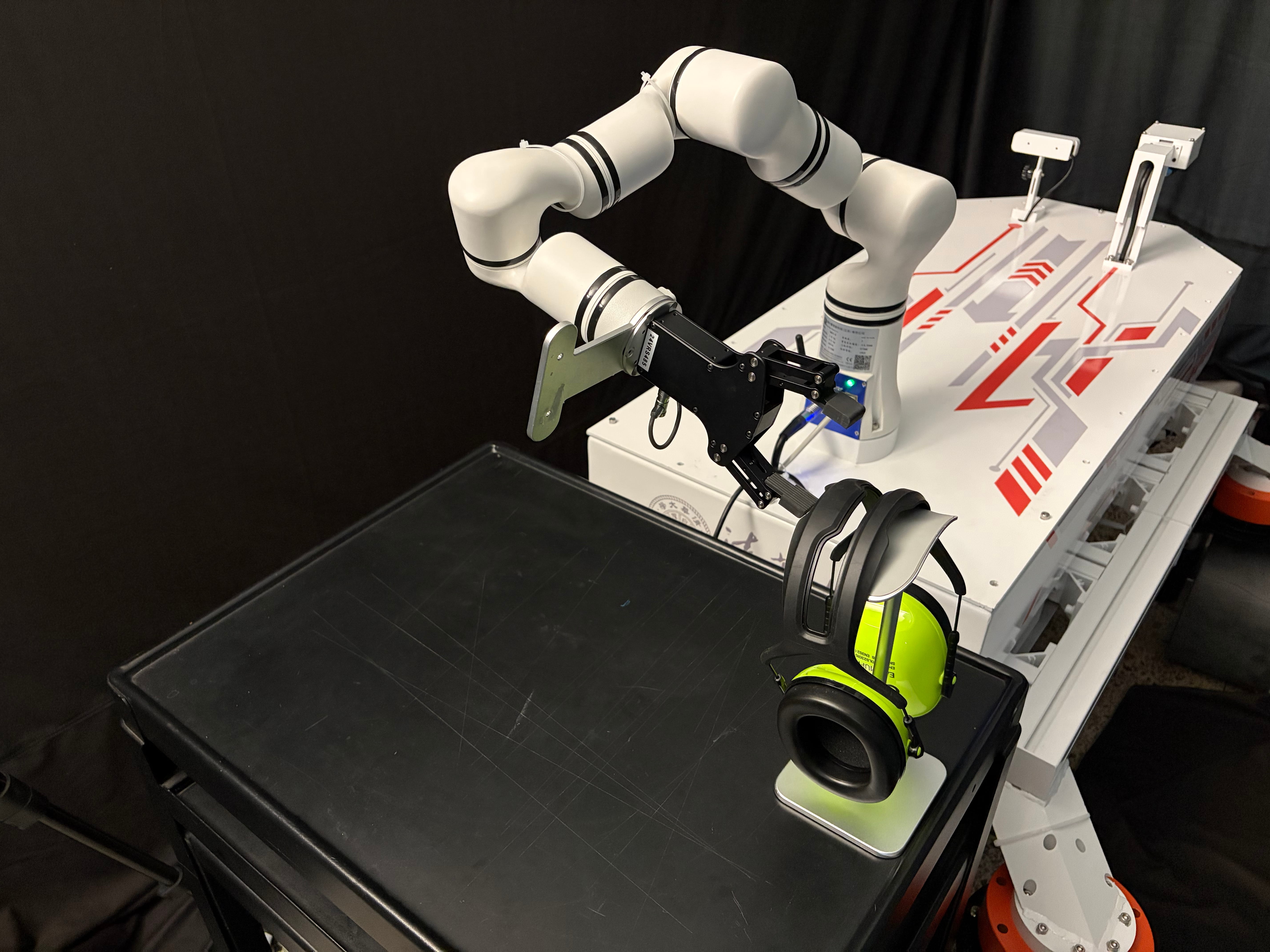}
    \caption{$H_2$ (held-out, higher/top-down)}
    \label{fig:realrobot_views_H2}
  \end{subfigure}
  \caption{Scene-camera views used for the real-robot held-out evaluation. All six images show the same physical state of the Headphone stand task moments before grasp contact, with the gripper positioned a few centimeters above the headphone band. Top row: training-camera placements $C_0,C_1,C_2$ used for both demonstration collection and cross-view supervision. Bottom row: held-out placements $H_0,H_1,H_2$ used only at evaluation. The held-out views progressively depart from the training cameras: $H_0$ extrapolates camera distance, $H_1$ rotates azimuthally, and $H_2$ combines a higher position with a more top-down angle. }
  \label{fig:realrobot_views}
\end{figure}

Two patterns emerge, both visually consistent with Figure~\ref{fig:realrobot_views}. First, the proposed method's advantage scales with the geometric aggressiveness of the held-out placement: aggregating across tasks, FM-only and the proposed method differ by $4$ rollouts on $H_0$ ($20/30$ vs $24/30$), $6$ on $H_1$ ($17/30$ vs $23/30$), and $9$ on $H_2$ ($11/30$ vs $20/30$). Second, the Headphone stand task on $H_2$ is the most extreme case: FM-only never succeeds ($0/10$), while the proposed method succeeds in $4/10$ rollouts---consistent with the structural-observability limitation discussed in Section~\ref{sec:discussion}, since $H_2$ combines the most aggressive viewpoint change with the task that has the smallest visually-recoverable contact geometry.

\subsection{Real-robot reporting rule}
The main real-robot claim should be based on held-out-camera success, the seen-to-held-out drop, and the number of tasks that improve. We report real-robot results as success counts and Wilson intervals. Qualitative rollout frames can be shown in the real-robot figure, videos are supplementary evidence and visible at project page.

\subsection{Qualitative failure analysis}
The held-out-camera degradation is task-dependent. The headphone-stand task shows the largest drop for both methods: FM-only multi-view mixed training reaches only $5/30$ held-out successes, while the proposed method improves to $14/30$ but remains far from saturated. Qualitatively, this task requires fine gripper--headphone alignment and is sensitive to depth ambiguity under novel viewpoints. The laptop-lid task is visually larger and more constrained by contact geometry, and both methods transfer more reliably, with the proposed method reaching $30/30$ held-out successes. These patterns suggest that cross-view action-flow consistency helps substantially under camera shift, but does not remove failures caused by poor observability, thin objects, or viewpoint-induced depth ambiguity.

\section{Mechanism Proof}
\label{app:mechanism_proof}
This section expands the mechanism statements used in the main paper.
It is not required to understand the main experimental claim; the main
text contains the key equations and empirical controls.

\begin{proposition}[Action-chunk divergence bound]
\label{prop:divergence-bound}
Let $0=t_0<t_1<\cdots<t_N=1$ be a uniform Euler grid with step size
$\Delta t=1/N$. Let $x^{(0)}_{t_n}$ and $x^{(p)}_{t_n}$ denote the
Euler-integration trajectories for the nominal and perturbed views
respectively, both initialized at the same noise
$x^{(0)}_{t_N}=x^{(p)}_{t_N}=\varepsilon$.
If $v_\theta(\cdot,t\mid o,l,q)$ is $L$-Lipschitz in its first argument
uniformly in $t$, then
\begin{equation}
  \bigl\lVert x^{(0)}_{t_0}-x^{(p)}_{t_0}\bigr\rVert^2
  \;\leq\;
  e^{2L}\sum_{n=1}^{N}\Delta t\,
  \bigl\lVert
    v_\theta(x^{(0)}_{t_n},t_n\mid o_0,l,q)
    -v_\theta(x^{(0)}_{t_n},t_n\mid o_p,l,q)
  \bigr\rVert^2.
  \label{eq:divergence-bound-full}
\end{equation}
\end{proposition}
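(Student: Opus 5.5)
The plan is a discrete Grönwall argument on the error between the two Euler trajectories, followed by Cauchy--Schwarz to pass from a linear bound to the squared one. Under the paper's convention that $t=1$ is noise and $t=0$ is the action, the Euler scheme runs backward:
\[
x^{(\cdot)}_{t_{n-1}} = x^{(\cdot)}_{t_n} - \Delta t\, v_\theta(x^{(\cdot)}_{t_n}, t_n \mid o_\cdot, l, q),
\qquad n = N, \ldots, 1 .
\]
I will define the error $e_n = x^{(0)}_{t_n} - x^{(p)}_{t_n}$. The shared initialization gives $e_N = 0$. I will also write the view disagreement along the nominal trajectory as
\[
d_n = \bigl\lVert v_\theta(x^{(0)}_{t_n},t_n\mid o_0,l,q)-v_\theta(x^{(0)}_{t_n},t_n\mid o_p,l,q)\bigr\rVert ,
\]
which is exactly the quantity on the right-hand side of~\eqref{eq:divergence-bound-full}.

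\textbf{One-step recursion.} Subtracting the two Euler updates, I add and subtract $v_\theta(x^{(0)}_{t_n},t_n\mid o_p,l,q)$. This gives
\[
e_{n-1} = e_n - \Delta t\bigl[v_\theta(x^{(0)}_{t_n},t_n\mid o_0)-v_\theta(x^{(0)}_{t_n},t_n\mid o_p)\bigr] - \Delta t\bigl[v_\theta(x^{(0)}_{t_n},t_n\mid o_p)-v_\theta(x^{(p)}_{t_n},t_n\mid o_p)\bigr].
\]
The first bracket has norm $d_n$. The second bracket is controlled by the Lipschitz hypothesis applied to the perturbed-view field, so it is at most $L\lVert e_n\rVert$. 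Here I read ``uniformly'' as covering both conditionings $o_0$ and $o_p$; in fact only the $o_p$ field needs to be Lipschitz. The triangle inequality then yields
\[
\lVert e_{n-1}\rVert \le (1+L\Delta t)\lVert e_n\rVert + \Delta t\, d_n .
\]

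\textbf{Unrolling.} Iterating the recursion from $n=N$ down to $n=1$ with $e_N=0$ gives
\[
\lVert e_0\rVert \le \sum_{n=1}^{N}(1+L\Delta t)^{n-1}\Delta t\, d_n .
\]
Since $(1+L\Delta t)^{n-1}\le (1+L/N)^N\le e^{L}$, this becomes $\lVert e_0\rVert\le e^{L}\sum_n \Delta t\, d_n$. The $n=N$ term appears harmlessly because the disagreement at the initial noise point still drives the first step.

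\textbf{Squaring.} I apply Cauchy--Schwarz with weights $\Delta t$ and use $\sum_{n=1}^N \Delta t = 1$:
\[
\Bigl(\sum_n \Delta t\, d_n\Bigr)^2 \le \Bigl(\sum_n \Delta t\Bigr)\Bigl(\sum_n \Delta t\, d_n^2\Bigr) = \sum_n \Delta t\, d_n^2 .
\]
Squaring the unrolled bound then gives~\eqref{eq:divergence-bound-full} with constant $e^{2L}$.

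\textbf{Main obstacle.} The step needing care is choosing where to evaluate the disagreement. The add-and-subtract must be placed so that the disagreement sits on the \emph{nominal} trajectory, as the statement requires, and the Lipschitz step falls on the perturbed-view field. If the roles were swapped, the bound would instead involve the perturbed trajectory. The other point to check is that the constant is uniform in $N$. This relies on bounding $(1+L/N)^N$ by $e^{L}$ and on Cauchy--Schwarz using total time $1$; this uniformity is what lets $C$ in~\eqref{eq:action-divergence-bound} depend only on $L$ and the grid.
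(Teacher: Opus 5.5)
Your proof is correct and follows the paper's argument essentially step for step. Both add and subtract $v_\theta(x^{(0)}_{t_n},t_n\mid o_p,l,q)$ to get $\lVert e_{n-1}\rVert\le(1+L\Delta t)\lVert e_n\rVert+\Delta t\,d_n$, unroll from the shared initialization using $(1+L\Delta t)^{n-1}\le e^{L}$, and apply Cauchy--Schwarz with weights $\Delta t$ summing to one. Your observation that only the $o_p$-conditioned field needs to be Lipschitz is accurate and slightly sharpens the stated hypothesis.
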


\begin{proof}
Define $\delta_n=\lVert x^{(0)}_{t_n}-x^{(p)}_{t_n}\rVert$ and
\[
  e_n=\bigl\lVert
    v_\theta(x^{(0)}_{t_n},t_n\mid o_0,l,q)
    -v_\theta(x^{(0)}_{t_n},t_n\mid o_p,l,q)
  \bigr\rVert.
\]
The Euler update from $t_n$ to $t_{n-1}$ is
$x^{(\cdot)}_{t_{n-1}}=x^{(\cdot)}_{t_n}
-\Delta t\,v_\theta(x^{(\cdot)}_{t_n},t_n\mid o_\cdot,l,q)$.
Subtracting the two trajectories and applying the triangle inequality:
\begin{equation}
  \delta_{n-1}
  \;\leq\;
  \delta_n
  +\Delta t\,\bigl\lVert
    v_\theta(x^{(0)}_{t_n},t_n\mid o_0)
    -v_\theta(x^{(p)}_{t_n},t_n\mid o_p)
  \bigr\rVert.
  \label{eq:triangle-step}
\end{equation}
Inserting $\pm\,v_\theta(x^{(0)}_{t_n},t_n\mid o_p)$ into the norm
and applying the $L$-Lipschitz condition to the second difference:
\begin{equation}
  \delta_{n-1}\;\leq\;(1+L\Delta t)\,\delta_n+\Delta t\,e_n.
  \label{eq:gronwall-step}
\end{equation}
Unrolling~\eqref{eq:gronwall-step} from $n=N$ down to $n=1$
with $\delta_N=0$ gives
\begin{equation}
  \delta_0
  \;\leq\;
  \sum_{n=1}^{N}(1+L\Delta t)^{n-1}\,\Delta t\,e_n
  \;\leq\;
  e^{L}\sum_{n=1}^{N}\Delta t\,e_n,
  \label{eq:unrolled}
\end{equation}
where $(1+L\Delta t)^{n-1}\leq(1+L\Delta t)^{N}\leq e^{LN\Delta t}=e^{L}$.
Squaring~\eqref{eq:unrolled} and applying Cauchy--Schwarz with
uniform weights $\Delta t$:
\begin{equation}
  \delta_0^2
  \;\leq\;
  e^{2L}\!\left(\sum_{n=1}^{N}\Delta t\,e_n\right)^{\!2}
  \;\leq\;
  e^{2L}
  \underbrace{\left(\sum_{n=1}^{N}\Delta t\right)}_{=\,1}
  \left(\sum_{n=1}^{N}\Delta t\,e_n^2\right)
  \;=\;
  e^{2L}\sum_{n=1}^{N}\Delta t\,e_n^2,
\end{equation}
which is~\eqref{eq:divergence-bound-full}.
\end{proof}

\begin{remark}[Training-time surrogate]
Proposition~\ref{prop:divergence-bound} bounds action-chunk divergence
by velocity disagreement evaluated at the \emph{nominal Euler trajectory}
points $x^{(0)}_{t_n}$. The cross-view loss $\mathcal{L}_{\mathrm{CV}}$
in~\eqref{eq:paired-cv-loss} evaluates disagreement at the
\emph{training-time linear interpolation} points
$x_{t_k}=t_k\varepsilon+(1-t_k)a$, which differ from
$x^{(0)}_{t_n}$ in general.
The two sets of points coincide when the learned velocity field
produces straight-line trajectories, which holds for the optimal
conditional flow matching solution.
In practice, the training-time objective therefore serves as a
tractable surrogate for the right-hand side
of~\eqref{eq:divergence-bound-full}; this is the sense in which it
is a ``local surrogate'' as described in the main text. The bound is intended as a qualitative motivation rather than a tight quantitative estimate: the constant can be loose for high-capacity neural velocity fields. Its role is to show that controlling velocity disagreement along the integration path is sufficient to control action-chunk divergence, while the training objective provides a practical surrogate for that quantity.
\end{remark}

\begin{lemma}[Shuffled-loss decomposition]
\label{lem:shuffled}
Let $s$ and $s'$ be independent draws from the same state distribution.
Omitting shared arguments $(x_t,t,l,q)$ for readability, the population
shuffled cross-view loss satisfies
\begin{equation}
  \mathbb{E}_{s,s'}\lVert v_p(s')-v_0(s)\rVert^2
  \;=\;
  \mathbb{E}_{s}\lVert v_p(s)-v_0(s)\rVert^2
  +2\,\mathrm{Tr}\!\bigl(\mathrm{Cov}_{s}(v_p(s),v_0(s))\bigr).
  \label{eq:shuffled-decomp-full}
\end{equation}
\end{lemma}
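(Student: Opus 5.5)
The plan is to expand both sides of \eqref{eq:shuffled-decomp-full} into first and second moments and compare them directly. No earlier result is needed beyond linearity of expectation and independence of $s$ and $s'$. Throughout, I treat the suppressed arguments $(x_t,t,l,q)$ as held fixed, so that $v_p(\cdot)$ and $v_0(\cdot)$ are deterministic vector-valued functions of the state. I also assume $\mathbb{E}_s\lVert v_p(s)\rVert^2<\infty$ and $\mathbb{E}_s\lVert v_0(s)\rVert^2<\infty$, so that every expectation and covariance below is finite. Write $\mu_p=\mathbb{E}_s v_p(s)$ and $\mu_0=\mathbb{E}_s v_0(s)$.

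\textbf{Step 1: the shuffled loss.} Expand the square as
\[
\lVert v_p(s')-v_0(s)\rVert^2=\lVert v_p(s')\rVert^2+\lVert v_0(s)\rVert^2-2\langle v_p(s'),v_0(s)\rangle .
\]
Take expectations over the product measure. Because $s$ and $s'$ are identically distributed, the first term gives $\mathbb{E}_s\lVert v_p(s)\rVert^2$. Because they are independent, the cross term factorizes: $\mathbb{E}_{s,s'}\langle v_p(s'),v_0(s)\rangle=\langle \mu_p,\mu_0\rangle$. Hence
\[
\mathbb{E}_{s,s'}\lVert v_p(s')-v_0(s)\rVert^2=\mathbb{E}_s\lVert v_p(s)\rVert^2+\mathbb{E}_s\lVert v_0(s)\rVert^2-2\langle\mu_p,\mu_0\rangle .
\]

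\textbf{Step 2: the matched loss.} Expanding the same way, with both predictions evaluated at the same state, gives
\[
\mathbb{E}_s\lVert v_p(s)-v_0(s)\rVert^2=\mathbb{E}_s\lVert v_p(s)\rVert^2+\mathbb{E}_s\lVert v_0(s)\rVert^2-2\,\mathbb{E}_s\langle v_p(s),v_0(s)\rangle .
\]

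\textbf{Step 3: identify the difference.} Subtracting Step 2 from Step 1, the squared-norm terms cancel and leave
\[
2\bigl(\mathbb{E}_s\langle v_p(s),v_0(s)\rangle-\langle\mu_p,\mu_0\rangle\bigr).
\]
Write the inner product coordinatewise as $\sum_j v_{p,j}v_{0,j}$. Then each coordinate contributes $\mathbb{E}[v_{p,j}v_{0,j}]-\mu_{p,j}\mu_{0,j}=\mathrm{Cov}_s(v_{p,j},v_{0,j})$. Summing over $j$ gives exactly the trace of the cross-covariance matrix $\mathrm{Cov}_s(v_p(s),v_0(s))=\mathbb{E}_s[(v_p-\mu_p)(v_0-\mu_0)^\top]$. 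This is the claimed identity.

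The main obstacle is not the algebra but making the hypotheses explicit. First, the identity requires exact independence of $s$ and $s'$. The finite-batch derangement $\pi$ only approximates this, because it samples without replacement, so I will state the lemma for the population limit. Second, in the actual training loss each row's flow coordinate $x_t$ depends on its own action $a$, and hence on its state. "Omitting shared arguments" must therefore be read as conditioning on a common $(x_t,t,l,q)$, or as absorbing these arguments into the state so that $v_p$ and $v_0$ are functions of the joint draw. I will note this interpretation explicitly, so that the independence and factorization in Step 1 are justified.
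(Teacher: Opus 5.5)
Your proof is correct and is essentially the paper's argument. You expand the shuffled loss and use independence of $s,s'$ to factor the cross term into $\langle\mu_p,\mu_0\rangle$, expand the matched loss the same way, and subtract to obtain $2\bigl(\mathbb{E}_s\langle v_p,v_0\rangle-\langle\mu_p,\mu_0\rangle\bigr)=2\,\mathrm{Tr}\,\mathrm{Cov}_s(v_p,v_0)$. Your finite-second-moment assumption and your reading of the suppressed arguments as part of each row's own draw (since each row's $x_t$ depends on its own action) usefully make explicit what the paper leaves implicit. One small correction to your caveat: a derangement applied to an i.i.d.\ batch already pairs exactly independent rows, so any departure from the population statement comes from how the batch itself is sampled, not from the derangement.
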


\begin{proof}
Expanding the squared norm and using independence of $s$ and $s'$:
\begin{equation}
  \mathbb{E}_{s,s'}\lVert v_p(s')-v_0(s)\rVert^2
  =\mathbb{E}\lVert v_p\rVert^2
  +\mathbb{E}\lVert v_0\rVert^2
  -2\,\langle\mathbb{E}\,v_p,\,\mathbb{E}\,v_0\rangle,
  \label{eq:shuf-expand}
\end{equation}
where the cross term factorizes as
$\mathbb{E}_{s,s'}[v_p(s')^{\!\top}v_0(s)]
=(\mathbb{E}_{s'}v_p(s'))^{\!\top}(\mathbb{E}_s v_0(s))$
by independence.
The matched loss expands as
\begin{equation}
  \mathbb{E}_{s}\lVert v_p(s)-v_0(s)\rVert^2
  =\mathbb{E}\lVert v_p\rVert^2
  +\mathbb{E}\lVert v_0\rVert^2
  -2\,\mathbb{E}\langle v_p,v_0\rangle.
  \label{eq:match-expand}
\end{equation}
Subtracting~\eqref{eq:match-expand} from~\eqref{eq:shuf-expand}:
\begin{equation}
  \mathbb{E}_{s,s'}\lVert v_p(s')-v_0(s)\rVert^2
  -\mathbb{E}_s\lVert v_p(s)-v_0(s)\rVert^2
  =2\bigl(\mathbb{E}\langle v_p,v_0\rangle
  -\langle\mathbb{E}\,v_p,\mathbb{E}\,v_0\rangle\bigr)
  =2\,\mathrm{Tr}\!\bigl(\mathrm{Cov}_s(v_p,v_0)\bigr),
\end{equation}
which gives~\eqref{eq:shuffled-decomp-full}.
\end{proof}

\section{Implementation and Reproducibility Details}
\label{app:implementation}

\paragraph{Backbone and action expert.}
All policies are $\pi_{0.5}$-style models with the standard pretrained vision-language trunk and a flow-matching action expert~\citep{intelligence2025pi05}. The simulation action space uses the LIBERO 7-DoF action representation. Although the underlying action representation may contain padded dimensions, the cross-view consistency term is applied only to the active action dimensions, $A_{\mathrm{act}}=7$.

\paragraph{Training schedule.}
Unless otherwise stated, simulation policies are trained for $10{,}000$ steps from the same pretrained initialization with AdamW and a cosine learning-rate schedule. Paired runs use batches of $192$ synchronized pairs, $224\times224$ images, and action horizon $H=10$. The primary FM-only same-state-pair control and the proposed configuration use the same paired batches, flow-coordinate sampling, optimization, and training budget; the proposed configuration ramps $\lambda_{\mathrm{CV}}$ from zero to $0.10$ over the first $500$ steps. Both are evaluated over seeds 42, 43, and 44. Final simulation evaluation uses the checkpoint at the end of training. All training configs are provided in the open-sourced code repository linked from the project page.

\paragraph{Image augmentation.}
The final $K=2$ configuration uses no spatial augmentation and applies independent color jitter to the two views. Earlier exploratory single-sample runs used independent spatial augmentation; the corresponding ablation in Table~\ref{tab:app_extended_ablations} is reported separately and is not the recipe used for the main $87.2\%$ camera-track result.

\paragraph{Flow-time and noise sampling.}
For the proposed configuration, we use $K=2$ flow samples per pair. For each pair $i$ and sample $k$, we draw $(t_k^{(i)},\varepsilon_k^{(i)})$ and construct the same noisy action $x_{t_k}^{(i)}$ for both nominal and perturbed views. The same $(t_k^{(i)},\varepsilon_k^{(i)})$ is shared between the two views of a matched pair so that the cross-view term compares velocity predictions at the same point on the action-flow path. The proposed configuration uses $t\sim\mathrm{Beta}(2,3)$, which concentrates flow-time samples in the lower-to-mid range of $t$ (mode at $t=\tfrac{1}{3}$), away from both the pure-noise boundary $t=1$ where velocity predictions are uninformative and the near-action boundary $t=0$ where predictions have already converged.

\paragraph{Bilateral gradient and shuffled-pair control.}
The proposed cross-view loss carries gradients through both the nominal and perturbed branches. The shuffled-pair mode samples a batch-level derangement and applies it only inside the cross-view term; all flow-matching labels remain row-local and correct. This makes the shuffled-pair control a test of action-equivalent pairing rather than a test of corrupted supervised labels.

\paragraph{Compute.}
For the proposed $K=2$ setting, the vision-language trunk forward depends only on the view and can be reused across the $K$ flow samples within a pair; only the action expert is re-evaluated per sample. The per-step wall-clock overhead at $K=2$  ($\approx 2.8 \;s/step$) is therefore below a naive $2\times$ increase over the single-sample variant ($\approx 2.6 \;s/step$). Numbers are measured on a 6$\times$ NVIDIA RTX PRO 6000 Server Edition GPU setup.

\section{Exploratory Feature-Level Alternatives}
\label{app:feature_alternatives}

Before adopting action-flow consistency as the main objective, we explored feature-level alternatives to camera robustness. These experiments are not part of the main evidence for the proposed method, and they are not intended to rule out feature-level methods in general. They explain why the final paper regularizes the policy's own action-flow field rather than an auxiliary representation.

\paragraph{Canonical-token injection.}
One approach injected view-stable canonical tokens (output of a trained transformer module from VGGT hidden layer tensors) into the VLA through an additional cross-attention path. This left the dense visual stream available to the action decoder. The corrected canonical-token injection run reached $15.3\%$ camera-track success, comparable to or below the nominal-only floor. Matched, shuffled, and constant canonical-token controls produced similar behavior, suggesting that the policy did not use the sample-specific canonical content as an action-relevant signal.

\paragraph{Canonical residual action anchor.}
A second approach used canonical tokens to predict a residual action-flow correction gated by a learned scalar. The gate remained nearly closed, and the aggregate camera-track score remained near the nominal-only floor. Forcing the canonical residual branch to be fully active caused nominal-camera rollouts to collapse, indicating that the frozen view-stable representation was not itself an action-executable representation.


\paragraph{Hard action-path bottleneck.}
A further approach inserted a compact bottleneck between the vision-language trunk and the action decoder and masked dense image-token columns from the action suffix. This made the bottleneck action-grounded, but a same-data attribution test did not support a positive method claim: a standard flow-matching policy trained on the same $30\%$ subset reached $71.4\%$, while the regularized bottleneck variant reached $67.4\%$.

\paragraph{Design implication.}
Across these exploratory attempts, a recurring issue was that the policy could use a high-capacity path around the intended view-stable channel. This motivated the final design in the main paper: instead of requiring a particular hidden feature to become invariant, we directly regularize the action-flow predictions produced by the deployed policy. The matched-vs.-shuffled controls in the main paper then test whether this action-level consistency depends on true action-equivalent pairing.

\section{Scope Comparison with Related Camera-Aware Methods}
\label{app:scope_comparison}

\begin{table}[h]
  \centering
  \scriptsize
  \caption{Inference-contract comparison between representative camera-aware VLA methods and the proposed cross-view action-flow consistency. ``Inference-time requirements'' lists inputs or computation needed at deployment beyond what our single scene RGB + language + proprioception interface uses.}
  \label{tab:app_scope_comparison}
  \setlength{\tabcolsep}{4pt}
  \renewcommand{\arraystretch}{1.15}
  \begin{tabular}{p{0.17\linewidth}p{0.21\linewidth}p{0.28\linewidth}p{0.26\linewidth}}
    \toprule
    Family & Representative work & Inference-time requirements & Relation to this work \\
    \midrule
    Camera conditioning
      & \citet{jiang2025cameracond}
      & Camera extrinsics, Pl\"ucker rays, or camera labels supplied to the policy
      & We use no camera labels or extrinsics at inference \\
    Camera-space action
      & \citet{zhang2025ocvla}
      & Extrinsic calibration; actions are defined and predicted in the camera frame
      & We keep actions in the robot-base frame; no calibration is required \\
    Test-time restoration / adaptation
      & \citet{heo2026anycamvla,vlageneralizable2025}
      & A view-restoration or representation-recalibration module applied at deployment
      & No test-time rendering, restoration, or representation-adaptation stage \\
    Training-time view synthesis
      & \citet{tian2024vista}
      & No additional deployed-policy input; synthesized viewpoints are generated before training
      & Like ours, deployment is unchanged, but our objective uses observed action-equivalent views rather than synthesized replacements \\
    3D / geometric inputs
      & \citet{qu2025spatialvla,abouzeid2025geoaware,li2025pointvla,li2025bridgevla,singh2025ogvla}
      & Depth, point clouds, RGB-D, or learned geometric priors as additional inputs
      & A single scene RGB image at inference \\
    Multi-view representation
      & \citet{seo2023mvmwm,pang2025reviwo}
      & A view-invariant representation or multi-view world-model encoder
      & Consistency placed at the action-flow output, not at a hidden representation \\
    \midrule
    \textbf{This work}
      & ---
      & Single scene RGB, language, proprioception; multi-view pairs are used only during training
      & --- \\
    \bottomrule
  \end{tabular}
\end{table}

Scene-camera viewpoint robustness in VLA policies has been approached through several distinct routes with different deployment requirements. Table~\ref{tab:app_scope_comparison} makes those requirements explicit. Camera-conditioning methods require camera labels or extrinsics; camera-space action formulations require extrinsic calibration and a non-base action frame; test-time restoration or representation recalibration adds computation at deployment; 3D/geometric methods require depth, point clouds, RGB-D, or geometric priors. Training-time view synthesis is different: it can leave the deployed policy interface unchanged, but its supervision depends on generated views. The proposed method also leaves deployment unchanged, uses observed action-equivalent views only during training, and places consistency directly on the action-flow velocity field that produces rollout behavior.

The comparison clarifies why the proposed method is positioned as a training-time recipe rather than as an alternative to camera-aware architectures. Methods that change the inference interface are complementary in principle, while training-time view synthesis addresses the same deployment constraint through a different source of viewpoint diversity.

\section{Future Work}
\label{app:future_work}

\paragraph{Cross-VLA-family transfer.}
The proposed objective is stated at the level of an action-flow field $v_\theta$ and can in principle apply to any flow- or diffusion-style VLA action head. We evaluate it only in a $\pi_{0.5}$-style policy. For autoregressive action-token VLAs, the analogous objective would regularize action-token logits or hidden states immediately upstream of the action decoder under the same action-equivalent pairing.

\paragraph{Data-efficient paired fine-tuning.}
The main simulation experiments use a large same-state paired dataset. In practice, a robot user may want to collect a smaller synchronized multi-camera dataset on a target setup and apply cross-view fine-tuning as a short adaptation stage. A useful follow-up is to sweep the number of action-equivalent pairs and fine-tuning steps required to obtain most of the camera-robustness gain.



\end{document}